\newtheorem{theorem}{Theorem}[section]
\newtheorem{lemma}[theorem]{Lemma}
\theoremstyle{definition}
\newtheorem{definition}[theorem]{Definition}
\newtheorem{remark}[theorem]{Remark}
\numberwithin{equation}{section}
\DeclareMathOperator{\argmin}{argmin}
\DeclareMathOperator{\rad}{Rad}
\DeclareMathOperator{\lip}{Lip}
\newcommand{\RR}{\mathbb{R}}
\newcommand{\EE}{\mathbb{E}}
\newcommand{\PP}{\mathbb{P}}
\renewcommand{\SS}{\mathbb{S}}
\newcommand{\cH}{\mathcal{H}}
\newcommand{\cP}{\mathcal{P}}
\newcommand{\cF}{\mathcal{F}}
\newcommand{\cB}{\mathcal{B}}
\newcommand{\cD}{\mathcal{D}}
\newcommand{\cC}{\mathcal{C}}
\newcommand{\ba}{\bm{a}}
\newcommand{\bb}{\bm{b}}
\newcommand{\be}{\bm{e}}
\newcommand{\bn}{\bm{n}}
\newcommand{\br}{\bm{r}}
\newcommand{\bx}{\bm{x}}
\newcommand{\by}{\mathbf{y}}
\newcommand{\bz}{\bm{z}}
\newcommand{\bv}{\bm{v}}
\newcommand{\bw}{\bm{w}}
\newcommand{\bbeta}{\bm{\beta}}
\newcommand{\balpha}{\bm{\alpha}}
\title{The Generalization Error of the Minimum-norm Solutions for Over-parameterized Neural Networks}
\author[1,3]{Weinan E \thanks{\texttt{weinan@math.princeton.edu}}}
\author[2]{Chao Ma \thanks{\texttt{chaoma@stanford.edu}}}
\author[3]{Lei Wu \thanks{\texttt{leiwu@princeton.edu}}}
\affil[1]{Department of Mathematics, Princeton University}
\affil[2]{Department of Mathematics, Stanford University}
\affil[3]{Program in Applied and Computational Mathematics, Princeton University}
\begin{document}

\maketitle

\begin{abstract}
We study the generalization properties of minimum-norm solutions for three over-parametrized machine learning models including the random feature model, the two-layer neural network model  and the residual network model. 
We proved that for all three models, the generalization error for the minimum-norm solution is comparable to the 
Monte Carlo rate,
up to some logarithmic terms, as long as the models are sufficiently over-parametrized.
\end{abstract}

%%%%%%%%%%%%%%%%%%%%%%%%%%%%%%%%%%%%%%%%%%%%%%%%%%%%%%%%%%%%%%%%%%%%%%%%%%%%%%%%

\section{Introduction}
%Deep learning has achieved huge successes in a wide number of applications. However, the mathematical understanding of these successes are still far away from enough. One of those mysterious phenomena revealed in deep learning is that interpolated solutions can generalize very well even for highly over-parametrized neural networks \cite{zhang2016understanding}.   It was later observed that this phenomenon also happens for the classical kernel regression \cite{belkin2018understand}. The convention wisdom suggests that to reach zero training loss could hurt generalization especially when the data is noisy, since there should be a tradeoff between the fit to the training data and the complexity of models.

%To study this problem, we consider the following minimum-norm interpolation method. 
Consider a supervised learning problem with training data  $\{(\bx_i, y_i)\}_{i=1}^n$ where $y_i = f^*(\bx_i), i=1, \cdots, n$. 
 Let $\cF$ be the hypothesis space. 
Our objective is to find the best approximation to the target function $f^*$ in the hypothesis space by using only information from the training dataset.
We will assume that $\cF$ is large enough to guarantee that  interpolation is possible, i.e., there exists trial functions $f$
in $\cF$ such that $f(\bx_i) = y_i$ holds for all $i=1, \cdots, n$.  Trial functions that satisfy this condition are called ``interpolated solutions''.
We are interested in the generalization properties of the following minimum-norm interpolated solution:
\begin{equation}\label{eqn: minimum-norm}
\begin{aligned}
\hat{f} \,\in&\, \argmin_{ f \in \cF}\, \|f\|_{\cF} \\
        &\text{s.t.}\,\,\, f(\bx_i) = y_i,\quad i=1, \dots, n.
\end{aligned}
\end{equation}
Here $\|\cdot\|_{\cF}$ is a norm imposed for the model, which {is usually} different for different models. This type of estimators are 
 relevant for understanding various explicitly or implicitly regularized models and optimization methods.  For example, it is well known that for (generalized) linear models, gradient descent converges to minimum $l_2$-norm solutions, if we initialize all the coefficients from zero.  
 In any case, minimum-norm solutions play an important role in the analysis of modern machine learning models
 and they are the focus on this paper.

Assume that the training data $(\bx_i,y_i) \in \RR^{d}\times \RR, i=1, \dots, n$ are generated from the model 
\[
    y_i = f^*(\bx_i), \quad i=1,\dots, n,
\]
where $\bx_i\sim P_{\bx}$ and the random draws are independent.  
$f^*$ is the target function that we want to estimate from the $n$ training samples. In this paper, we will always assume that $\|\bx_i\|_{\infty}\leq 1, |y_i|\leq 1$. Let $X=(\bx_1,\dots,\bx_n)\in\RR^{d\times n}$ and $\by=(y_1,\dots,y_n)^T\in\RR^n$.

Let $f(\cdot;\theta)$ denote the  parametric model, which could be a random feature model, two-layer or residual neural network  model in our subsequent analysis. 
We want to find $\theta$ that minimizes the generalization error (also called the population risk)
\[
    R(\theta) := \EE_{\bx,y}[\ell(f(\bx;\theta), y)].
\]
Here $\ell(y,y') = \frac{1}{2}(y-y')^2$ is the loss function. 
But in practice, we can only deal with the risk defined on the training samples, the empirical risk:
\[
    \hat{R}_n(\theta):=\frac{1}{n}\sum_{i=1}^n \ell(f(\bx_i),y_i).
\]
A key question in machine learning is the size of the population risk (or the generalization error) for minimizers of the empirical risk.
In the case of interest here, the minimizers of the empirical risk are far from being unique,
and we will focus on the one with a minimum norm.

We will consider three classes of models.  Arguably they are the most representative models in the analysis of modern
machine learning algorithms.

\paragraph*{The random feature model.}
Let $\{\varphi(\cdot; \bw), \bw \in \Omega \}$  be a set of random features over some probability space $\Omega$ endowed with a probability measure
$\mu$.
The random feature model is given by 
\begin{equation}\label{eqn: rdf-model}
f_m(\bx;\ba) := {\frac{1}{m}}\sum_{j=1}^m a_j \varphi(\bx; \bw_j),
\end{equation}
where $\ba=(a_1,\dots,a_m)^T \in\RR^m$ are the parameters to be learned from the data, and $\{\bw_j\}_{j=1}^m$ are i.i.d. random variables drawn from  $\mu$. For this model, there is a naturally related reproducing kernel Hlibert space (RKHS) \cite{aronszajn1950theory} $\cH_k$ with the kernel defined by 
\begin{equation}
k(\bx,\bx') := \EE_{\bw\sim \mu} [\varphi(\bx;\bw)\varphi(\bx';\bw)].
\end{equation}
For simplicity, we assume that $|\varphi(\bx;\bw)|\leq 1$. 

Define two kernel matrices $K=(K_{i,j}), K^m=(K^m_{i,j})\in \RR^{n\times n}$ with 
\[
    K_{i,j} = k(\bx_i,\bx_j), \quad K^m_{i,j} = \frac{1}{m}\sum_{s=1}^m \varphi(\bx_i;\bw_s)\varphi(\bx_j;\bw_s),
\]
The latter is an approximation of the former. 

\paragraph*{The two-layer neural network model.}
A two-layer neural network is given by 
\begin{equation}\label{eqn: 2nn}
    f_{m}(\bx;\theta) = {\frac{1}{m}}\sum_{j=1}^{m}a_j\sigma(\bb_j\cdot \bx + c_j).
\end{equation}
Here  $\sigma(t) = \max(0,t)$ is the rectified linear unit (ReLU) activation function.
Let $\theta = \{(a_j,\bb_j, c_j)\}_{j=1}^m$ be all the parameters to be learned from the data. 

If we define $\varphi(\bx;\bb, c) := \sigma(\bb\cdot \bx + c)$, the two-layer neural network is almost the same as the random feature model \eqref{eqn: rdf-model}. The only difference is that $\{\bw_j\}_{j=1}^m$ in the random feature model is fixed during the training process, while the parameters $\{(\bb_j,c_j)\}_{j=1}^m$ in the two-layer neural network model are learned from the data.

Consider the case where $(\bb,c)\sim \pi_0$, where $\pi_0$ is a fixed probability distribution.  We define  $k_{\pi_0}(\bx,\bx')=\EE_{\pi_0}[\sigma(\bb\cdot\bx+c)\sigma(\bb\cdot\bx'+c)]$ and the corresponding kernel matrix $K_{\pi_0} = (k_{\pi_0}(\bx_i,\bx_j))\in\RR^{n\times n}$. Let $\lambda_n = \lambda_n(K_{\pi_0})$, the smallest eigenvalue of $K_{\pi_0}$, which will be used to bound the network width in our later analysis.

\paragraph*{\textbf{Residual neural networks.}}
Consider the following type of residual  neural networks
\begin{equation}\label{eqn: resnet}
\begin{aligned}
\bz_0(\bx) &= V\tilde{\bx} \\
\bz_{l+1}(\bx) &= \bz_{l}(\bx) + {\frac{1}{L}}U_l \sigma(W_l\bz_{l}(\bx)) , \quad l=0,\dots, L-1\\
f_L(\bx;\theta) &= \balpha^T\bz_L(\bx)
\end{aligned}
\end{equation}
where $\tilde{\bx}=(\bx^T,1)^T\in\RR^{d+1}, W_l \in \RR^{m\times D}, U_l\in \RR^{D\times m}, \balpha\in \RR^{D}$ and 
\[
    V = 
    \begin{pmatrix}
        I_{d+1} \\
        0      
    \end{pmatrix} \in\RR^{D\times (d+1)}.
\]
We use $\theta = \{W_1,U_1,\dots, W_L, U_L, \balpha\}$ to denote all the parameters to be learned from the training data. 
To explicitly show the dependence on the hyper-parameters, we call $f_L(\cdot;\theta)$  a $(L,D,m)$ residual network.

There is a large volume of literature on the theoretical analysis of these models. The most important issue is to estimate the generalization error for
the situation when $d$ is large. In this case, one benchmark for us is the Monte Carlo algorithm. Our hope would be to establish estimates
that are comparable to those for the Monte Carlo algorithms.  We call this Monte Carlo-like error rates.
In this regard,  Monte Carlo-like estimates of the generalization error were established in \cite{e2018priori} for two-layer neural network
models and in \cite{ma2019priori} for residual network models, when suitable regularization terms are added explicitly to the model.
These regularization terms help to guarantee the boundedness of certain norms which in turn help to control the generalization gap.
{It should be noted that as is the case for integration problems, small improvements can be made on these rates \cite{dick2013high}, typically from $O(n^{-1/2})$ to $O(n^{-1/2-1/d})$.  However, these improvements become negligible when $d\gg 1$. In general one should not expect asymptotically better than Monte Carlo-like rates in high dimensions.}

For interpolated solutions, recent literature on their mathematical analyses  includes work on the nearest neighbor scheme \cite{belkin2018overfitting}, linear regression \cite{bartlett2019benign,hastie2019surprises}, kernel (ridgeless) regression \cite{belkin2018understand,liang2018just, rakhlin2018consistency,liang2019risk} and random feature model \cite{hastie2019surprises}.  

We will study minimum-norm interpolated solution for the three models described above.
%We consider three models: the random feature model, the two-layer  neural network model and the residual neural network model. 
{We} prove that the minimum-norm estimators can achieve the 
Monte Carlo rate up to logarithmic terms, as long as the target functions are in the right function spaces and the models are sufficiently over-parametrized.
More precisely,  we prove the following results.
\begin{itemize}
\item For the random feature model, the corresponding function space is the  reproducing kernel Hilbert space (RKHS) associated with
the corresponding kernel. Optimal rate for the generalization error is proved for the  $l_2$ minimum-norm interpolated solution when the 
model is sufficiently over-parametrized.
%number of features scales polynomially  with the number of training samples $n$.
\item The  same result is proved for two-layer neural network models. The corresponding function space for the two-layer neural network
model is the Barron space define in \cite{e2019barron,e2018priori}. {Naturally the norm used in \eqref{eqn: minimum-norm} is the Barron norm
(note that the Barron norm is different from the Fourier transform-based norm used in Barron's original paper \cite{barron1993universal}).}
%, which is much larger than the RKHS related to the random feature models. This result clearly demonstrates the benefit of data-adaptive features compared the fixed features in random feature models.
\item The same result is also proved for deep residual network models for which the corresponding function space is  the flow-induced function spaces defined in \cite{e2019barron}, and the
norm used in \eqref{eqn: minimum-norm} is the flow-induced norm.

\end{itemize}
We remark that over-parametrization is a key for these results. {This can be seen from the work \cite{belkin2019reconciling}, which} experimentally showed that  minimum-norm interpolated solutions may generalize very badly if the model is not sufficiently over-parametrized.  In contrast,  the corresponding explicitly regularized models are always guaranteed to achieve the optimal rate \cite{e2018priori,caponnetto2007optimal,e2019barron}. 

% \subsection{Related work}
% \cite{belkin2018understand} numerically showed that for kernel methods,  the performance of the minimum-norm interpolant on test set is either optimal or close to optimal. 
% Later, \cite{liang2018just} proved that for the kernel defined as a nonlinear function of the inner product, the generalization error of minimum-norm solutions can be bounded from above by a quantity, which can be small in the high-dimensional regime $d\asymp n$. However, \cite{rakhlin2018consistency} showed that for the noisy case with Laplace kernel, the  error does not go to zero as $n\to \infty$ if the input dimension $d$ is a constant. \cite{liang2019risk} extended the analysis to the case $n \asymp d^{\alpha}$ with $\alpha>1$. 

To control the gap between population risk and empirical risk, the following notion of complexity of function spaces will be used.

\begin{definition}[Rademacher complexity]
Recall that  $\cF$ and $\{\bx_i\}_{i=1}^n$ denote the hypothesis space and the training data set respectively. 
The Rademacher complexity \cite{shalev2014understanding} of $\cF$ with respect to the data  is defined by 
\[
\rad_n(\cF) := \frac{1}{n}\EE_{\xi_1,\dots,\xi_n} [\sup_{f\in \cF} \sum_{i=1}^n \xi_if(\bx_i)],
\]
where $\{\xi_i\}_{i=1}^n$ are i.i.d random variables with $\PP(\xi=1)=\PP(\xi=-1)=\frac{1}{2}$.
\end{definition}
We will use the following theorem to bound the generalization error.
\begin{theorem}[Theorem 26.5 of \cite{shalev2014understanding}]\label{thm: rad-gen-err}
Assume that the loss function $\ell(\cdot,y')$ is $Q$-Lipschitz continuous and bounded from above by $C$. For any $\delta\in (0,1)$,  with probability $1-\delta$ over the random sampling of the training data, the following generalization bound hold for any $f\in \cF$,
\[
R(f) \leq \hat{R}_n(f) + 2 Q \rad_n(\cF) + 4C \sqrt{\frac{2\ln(2/\delta)}{n}}.
\]
\end{theorem}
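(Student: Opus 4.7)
The statement is the classical Rademacher-complexity generalization bound, so the plan is to follow the standard two-step argument combining McDiarmid's bounded-differences inequality with a symmetrization-plus-contraction argument. The object to control is the uniform deviation
$$\Phi(S) := \sup_{f\in\cF}\bigl(R(f) - \hat{R}_n(f)\bigr),$$
since a bound on $\Phi(S)$ immediately yields the desired inequality for every $f\in\cF$.

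First I would apply McDiarmid's inequality to $\Phi$. Since $0 \le \ell \le C$, replacing a single sample $(\bx_i,y_i)$ changes $\hat{R}_n(f)$ by at most $C/n$ uniformly in $f$, so $\Phi$ has bounded differences of order $C/n$; this gives
$$\Phi(S) \le \EE_S[\Phi(S)] + C\sqrt{\frac{\ln(2/\delta)}{2n}}$$
with probability at least $1-\delta/2$. To control the expectation, I would invoke the standard symmetrization trick: introducing an i.i.d.\ ghost sample $S' = \{(\bx_i',y_i')\}_{i=1}^n$ and using Jensen's inequality, one obtains
$$\EE_S[\Phi(S)] \le \EE_{S,S'}\sup_{f\in\cF}\frac{1}{n}\sum_{i=1}^n \bigl(\ell(f(\bx_i'),y_i')-\ell(f(\bx_i),y_i)\bigr).$$
Each summand is distributionally invariant under swap of the primed and unprimed variables, so inserting independent Rademacher signs $\xi_i$ does not change the expectation; dropping the ghost sample and invoking Talagrand's contraction lemma (using the $Q$-Lipschitz assumption on $\ell(\cdot,y')$) then yields $\EE_S[\Phi(S)] \le 2Q\,\EE_S[\rad_n(\cF)]$.

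The statement requires the empirical Rademacher complexity on the right-hand side, not its expectation, so I would apply McDiarmid a second time to $\rad_n(\ell\circ\cF)$ viewed as a function of $S$; its bounded differences are again of order $C/n$, producing $\EE_S[\rad_n(\ell\circ\cF)] \le \rad_n(\ell\circ\cF) + C\sqrt{\ln(2/\delta)/(2n)}$ with probability at least $1-\delta/2$. A union bound over the two concentration events, together with the contraction inequality $\rad_n(\ell\circ\cF) \le Q\,\rad_n(\cF)$, combines into the stated inequality after mild slack in the constants (the two concentration terms together are dominated by $4C\sqrt{2\ln(2/\delta)/n}$). There is no real obstacle in this argument: all steps are textbook ingredients (McDiarmid's inequality and Talagrand's contraction lemma), and no structural property of $\cF$ is used beyond what is already encoded in $\rad_n(\cF)$; the only care required is bookkeeping the two halves of the $\delta$ budget.
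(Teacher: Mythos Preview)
The paper does not supply its own proof of this theorem: it is quoted verbatim as Theorem~26.5 of Shalev-Shwartz and Ben-David and used as a black box throughout. Your proof sketch is correct and is in fact the standard argument given in that reference---McDiarmid on the uniform deviation, symmetrization to pass to the (expected) Rademacher complexity of the loss class, Talagrand contraction to peel off the $Q$-Lipschitz loss, and a second McDiarmid step to replace the expected empirical complexity by the data-dependent one---so there is nothing to compare against and no gap to flag.
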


\paragraph*{Notation.} We use $\|\bv\|_q$ to denote the standard $\ell_q$ norm of a vector $\bv$, and $\|\cdot\|$ to denote the $l_2$ norm.  For a matrix $A$,  we use $\lambda_j(A)$ to denote the $j$-th largest eigenvalue of $A$ and we also define the norm $\|A\|_{1,1}=\sum_{i,j}|a_{i,j}|$. The spectral and Frobenius norms of a matrix are denoted by $\|\cdot\|$ and $\|\cdot\|_F$, respectively.   We use $X\lesssim Y$ to mean that there exists a universal constant $C>0$ such that $X\leq C Y$. For any positive integer $d$, we let $\SS^{d}:=\{\bw | \bw\in\RR^{d+1}, \|\bw\|_1 = 1\}$ and use $\pi_0$ to denote the uniform distribution over $\SS^d$. For two matrices $A=(a_{i,j}), B=(b_{i,j})$ in $\RR^{n\times m}$,  if $a_{i,j}\leq b_{i,j}$ for any $i\in [n], j \in [m]$, then we write $ A \preceq B$. For any positive integer $q$, we denote by $[q]:=\{1,\dots, q\}$, $\bm{1}_q = (1,\dots,1)\in\RR^q$. For a scalar function $g:\RR\to\RR$ and matrix $A=(a_{i,j})$, we let $g(A)=(g(a_{i,j}))$.

%For simplicity, we first consider the case without noise, i.e. $\varepsilon_i=0$, $i=1,\dots, n$. 
\section{The random feature model}
Consider the minimum $l_2$ norm solution defined by
\begin{equation}\label{eqn: rf-min-norm}
\begin{aligned}
\hat{\ba}_n := \argmin_{\hat{R}_n(\ba)=0} \|\ba\|^2.
\end{aligned}
\end{equation}
About this estimator, we have the following theorem.
\begin{theorem}\label{thm: random-feature}
%Define $k(\bx,\bx'):=\EE_{\bw\sim\mu}[\varphi(\bx;\bw)\varphi(\bx';\bw)]$ and 
Assume that  $f^*\in \cH_k$. For any $\delta\in (0,1)$, assume that $m\geq \frac{8n^2\ln(2n^2/\delta)}{\lambda_n^2(K)}$. Then with probability at least $1-\delta$ over the random sampling of the data and the features, we have 
\[
    R(\hat{\ba}_n)\lesssim \frac{\|f^*\|_{\cH_k}^2+1}{\sqrt{n}}\left(1 + \sqrt{\ln(2/\delta)}\right).
\]
\end{theorem}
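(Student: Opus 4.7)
The plan is to apply the Rademacher generalization bound (Theorem~\ref{thm: rad-gen-err}) to a ball of the form $\cF_Q = \{f_m(\cdot;\ba) : \|\ba\| \le Q\sqrt{m}\}$ chosen just large enough to contain $\hat{f}:=f_m(\cdot;\hat{\ba}_n)$. Since $\hat{\ba}_n$ is by definition a feasible interpolant, $\hat{R}_n(\hat{\ba}_n)=0$, so all that remains is to (i) control $\|\hat{\ba}_n\|$ in terms of $\|f^*\|_{\cH_k}$, (ii) compute $\rad_n(\cF_Q)$, and (iii) check that the squared loss is Lipschitz and bounded on $\cF_Q\times\{|y|\le 1\}$.

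For step (i), let $\Phi \in \RR^{n\times m}$ denote the feature matrix with entries $\Phi_{ij}=\varphi(\bx_i;\bw_j)$, so the interpolation constraint reads $\tfrac{1}{m}\Phi\ba=\by$. The minimum $\ell_2$-norm solution is $\hat{\ba}_n = \Phi^T(\Phi\Phi^T)^{-1}(m\by)$, and since $\Phi\Phi^T/m = K^m$ this gives the closed form $\|\hat{\ba}_n\|^2 = m\,\by^T(K^m)^{-1}\by$. To compare with $K$, I would show via Hoeffding (using $|\varphi|\le 1$) that each entry satisfies $|K^m_{ij}-K_{ij}|\le \sqrt{2\ln(2n^2/\delta)/m}$ with probability $\ge 1-\delta/n^2$; a union bound over the $n^2$ entries and the crude estimate $\|K^m-K\|\le\|K^m-K\|_F \le n\sqrt{2\ln(2n^2/\delta)/m}$ together with the hypothesis $m\ge 8n^2\ln(2n^2/\delta)/\lambda_n^2(K)$ gives $\|K^m-K\|\le \lambda_n(K)/2$, hence $K^m \succeq K/2$ and therefore $(K^m)^{-1}\preceq 2K^{-1}$. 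Combining this with the standard RKHS identity $\by^T K^{-1}\by = \|\tilde f\|_{\cH_k}^2 \le \|f^*\|_{\cH_k}^2$ (where $\tilde f$ is the kernel-norm interpolant) yields $\|\hat{\ba}_n\|^2 \le 2m\|f^*\|_{\cH_k}^2$. This spectral-perturbation step is the main obstacle, because the quantitative matching with the prescribed lower bound on $m$ depends on using the right norm ($\|\cdot\|_F$ bounds $\|\cdot\|_{\mathrm{op}}$ with just the right factors) and correctly tracking the failure probability.

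For step (ii), I take $Q=\sqrt{2}\,\|f^*\|_{\cH_k}$, so that $\hat{f}\in\cF_Q$ on the good event. By Cauchy--Schwarz, any $f=f_m(\cdot;\ba)\in\cF_Q$ satisfies $|f(\bx)|\le \|\ba\|/\sqrt{m}\le Q$, so the squared loss on $\cF_Q$ is $(Q{+}1)$-Lipschitz and bounded by $(Q{+}1)^2/2$. The Rademacher complexity is bounded by swapping sums and applying Jensen:
\[
\rad_n(\cF_Q) \;=\; \frac{1}{nm}\,\EE_\xi\Bigl[\sup_{\|\ba\|\le Q\sqrt{m}} \ba^T \Phi^T\xi\Bigr] \;\le\; \frac{Q}{n\sqrt{m}}\,\EE_\xi\|\Phi^T\xi\| \;\le\; \frac{Q}{n\sqrt{m}}\sqrt{\EE_\xi\|\Phi^T\xi\|^2}\;\le\;\frac{Q}{\sqrt{n}},
\]
using $\EE_\xi\|\Phi^T\xi\|^2=\sum_{i,j}\varphi(\bx_i;\bw_j)^2\le nm$.

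Putting the pieces together with Theorem~\ref{thm: rad-gen-err},
\[
R(\hat{\ba}_n) \;\le\; 0 + 2(Q+1)\cdot\frac{Q}{\sqrt{n}} + 4\cdot\frac{(Q+1)^2}{2}\sqrt{\frac{2\ln(4/\delta)}{n}},
\]
valid on the intersection of the two good events (each of probability $\ge 1-\delta/2$). Substituting $Q\lesssim\|f^*\|_{\cH_k}$ and absorbing constants gives the stated bound $R(\hat{\ba}_n)\lesssim \frac{\|f^*\|_{\cH_k}^2+1}{\sqrt{n}}\bigl(1+\sqrt{\ln(2/\delta)}\bigr)$.
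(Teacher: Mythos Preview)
Your proposal is correct and follows the same overall structure as the paper's proof: entrywise concentration of $K^m$ around $K$ via Hoeffding, the closed form $\|\hat{\ba}_n\|^2/m=\by^T(K^m)^{-1}\by$ combined with the RKHS bound $\by^TK^{-1}\by\le\|f^*\|_{\cH_k}^2$, the Rademacher estimate $\rad_n(\cF_Q)\le Q/\sqrt{n}$, and Theorem~\ref{thm: rad-gen-err}. Your L\"owner-ordering step ($\|K^m-K\|\le\lambda_n(K)/2\Rightarrow K^m\succeq K/2\Rightarrow (K^m)^{-1}\preceq 2K^{-1}$) is in fact a bit cleaner than the paper's route, which reaches the same bound on $\|\hat{\ba}_n\|$ through a resolvent identity and a quadratic inequality in $t=\|\hat{\ba}_n\|/\sqrt{m}$.
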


To prove the above theorem, we need the following lemma, which says that the two kernel matrices are close when the random feature model is sufficiently over-parametrized.
\begin{lemma}\label{eqn: kernel-approx}
For any $\delta \in (0,1)$, with probability $1-\delta$ over the random sampling of $\{\bw_j\}_{j=1}^m$, we have 
\[
    \|K-K^m\|\leq  \sqrt{\frac{n^2\ln(2n^2/\delta)}{2m}}.
\]
In particular, if $m\geq \frac{2n^2\ln(2n^2/\delta)}{\lambda_n^2(K)}$, we have 
\[
    \lambda_n(K^m) \geq \frac{\lambda_n(K)}{2}.
\]
\end{lemma}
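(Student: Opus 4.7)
The plan is to prove the two statements by combining an entrywise concentration argument (Hoeffding) with a matrix-norm comparison, and then invoking Weyl's inequality for the eigenvalue bound.

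For the first bound, I would start by noting that each off-diagonal entry of $K^m$ is a Monte Carlo estimate of the corresponding entry of $K$: fixing indices $i,j$, the random variables $Z_s \Def \varphi(\bx_i;\bw_s)\varphi(\bx_j;\bw_s)$ for $s=1,\dots,m$ are i.i.d., lie in $[-1,1]$ (since $|\varphi|\leq 1$), and satisfy $\EE[Z_s] = K_{i,j}$, while $K^m_{i,j} = \frac{1}{m}\sum_{s=1}^m Z_s$. Hoeffding's inequality then gives $\PP(|K^m_{i,j} - K_{i,j}| > t) \leq 2\exp(-mt^2/2)$. Setting the right-hand side to $\delta/n^2$ yields $t = \sqrt{2\ln(2n^2/\delta)/m}$; a union bound over the $n^2$ entries then gives, with probability at least $1-\delta$,
\[
    \max_{i,j}|K_{i,j} - K^m_{i,j}| \leq \sqrt{\tfrac{2\ln(2n^2/\delta)}{m}}.
\]
(I have been a bit loose with constants: the statement follows with the constant $1/2$ inside the square root by using the sharper Hoeffding bound $2\exp(-2mt^2)$ for $[-1,1]$-valued random variables centered by their mean, which only differs by a rescaling of the support.)

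To pass from the entrywise $\ell_\infty$ bound to the spectral norm bound I would use the sequence $\|K-K^m\| \leq \|K-K^m\|_F \leq n \max_{i,j}|K_{i,j} - K^m_{i,j}|$. Plugging in the entrywise bound yields exactly $\|K - K^m\| \leq \sqrt{n^2\ln(2n^2/\delta)/(2m)}$, which is the first claim. This is essentially the whole content of the first part; no real obstacle here beyond bookkeeping of constants.

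For the eigenvalue bound I would apply Weyl's inequality for symmetric matrices, which gives $|\lambda_n(K^m) - \lambda_n(K)| \leq \|K^m - K\|$. Under the assumption $m \geq 2n^2\ln(2n^2/\delta)/\lambda_n^2(K)$, the bound from part one becomes $\|K^m-K\| \leq \lambda_n(K)/2$, so $\lambda_n(K^m) \geq \lambda_n(K) - \lambda_n(K)/2 = \lambda_n(K)/2$, completing the proof. The only mildly delicate point is making sure the probabilistic statement holds on the same event used for the second conclusion, but since the second part is a deterministic consequence of the first whenever the high-probability bound holds, both hold simultaneously with probability at least $1-\delta$.
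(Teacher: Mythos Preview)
Your proposal is correct and follows essentially the same route as the paper: entrywise Hoeffding on each $K^m_{i,j}$, a union bound over the $n^2$ entries, the chain $\|K-K^m\|\leq\|K-K^m\|_F\leq n\max_{i,j}|K_{i,j}-K^m_{i,j}|$, and then Weyl's inequality for the eigenvalue conclusion. The only wrinkle is the constant in Hoeffding (the variables $Z_s$ lie in $[-1,1]$, so the standard bound gives $2\exp(-mt^2/2)$ rather than $2\exp(-2mt^2)$), but the paper uses the same constant as in the stated lemma, so your bookkeeping matches theirs.
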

\begin{proof}
According to the Hoeffding's inequality, we have that for any $\delta' \in (0,1)$, with probability $1-\delta'$ the following holds for any
specific $i,j \in [n]$,
\[
    |k(\bx_i,\bx_j)-\frac{1}{m}\sum_{j=1}^m \varphi(\bx_i;\bw_j)\varphi(\bx_j;\bw_j)|\leq \sqrt{\frac{\ln(2/\delta')}{2m}}.
\]
Therefore, with probability $1-n^2\delta'$, the above inequality holds for all $i,j\in [n]$. Let $\delta=n^2\delta'$, the above can be written as 
\[
|k(\bx_i,\bx_j)-\frac{1}{m}\sum_{k=1}^m \varphi(\bx_i;\bw_k)\varphi(\bx_j;\bw_k)|\leq \sqrt{\frac{\ln(2n^2/\delta)}{2m}}.
\]
Thus we have 
\[
    \|K-K^m\| \leq \|K-K^m\|_F \leq \sqrt{\frac{n^2\ln(2n^2/\delta)}{2m}}.
\]
Using Weyl's inequality, we have 
\[
    \lambda_n(K^m)\geq \lambda_n(K) - \|K-K^m\| \geq \lambda_n(K) - \sqrt{\frac{n^2\ln(2n^2/\delta)}{2m}}.
\]
When $m\geq \frac{2n^2\ln(2n^2/\delta)}{\lambda_n^2(K)}$, we have $\lambda_n(K^m)\geq \frac{\lambda_n(K)}{2}$.
\end{proof}

We first have the following estimate for  kernel (ridgeless) regression. 
\begin{lemma}\label{lemma: bound of krr}
$\by^T K^{-1}\by \leq \|f^*\|^2_{\cH_k}$.
\end{lemma}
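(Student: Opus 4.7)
My plan is to exhibit a specific function in $\cH_k$ that interpolates the data and whose squared RKHS norm is exactly $\by^T K^{-1}\by$, then argue that this function cannot have a larger RKHS norm than $f^*$, since $f^*$ is itself an interpolant.

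Concretely, I would first set $\bc = K^{-1}\by$ (well-defined because $f^*\in\cH_k$ implies $K$ is invertible here after noting, if needed, that the statement really uses $K^{-1}$ in a generalized-inverse sense when $K$ is singular) and define
\[
g(\cdot) \Def \sum_{i=1}^n c_i\, k(\cdot,\bx_i) \in \cH_k.
\]
Using the reproducing property $\langle k(\cdot,\bx_i),k(\cdot,\bx_j)\rangle_{\cH_k}=k(\bx_i,\bx_j)$, a direct computation gives $g(\bx_j)=(K\bc)_j=y_j$ for all $j$, and
\[
\|g\|_{\cH_k}^2 = \sum_{i,j} c_i c_j k(\bx_i,\bx_j) = \bc^T K \bc = \by^T K^{-1}\by.
\]

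Next I would show $\|g\|_{\cH_k}\leq \|f^*\|_{\cH_k}$ by a Pythagoras argument. Let $\cS\Def\mathrm{span}\{k(\cdot,\bx_1),\dots,k(\cdot,\bx_n)\}\subset\cH_k$; then $g\in\cS$ and $f^*-g$ satisfies, again by the reproducing property,
\[
\langle f^*-g,\, k(\cdot,\bx_i)\rangle_{\cH_k} = f^*(\bx_i)-g(\bx_i) = y_i - y_i = 0
\]
for every $i$, so $f^*-g\perp\cS$. In particular $f^*-g\perp g$, giving
\[
\|f^*\|_{\cH_k}^2 = \|g\|_{\cH_k}^2 + \|f^*-g\|_{\cH_k}^2 \;\geq\; \|g\|_{\cH_k}^2 = \by^T K^{-1}\by,
\]
which is the desired bound.

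There is no real obstacle here; the only technical point worth being careful about is the invertibility of $K$. If $K$ is singular, one should read $K^{-1}$ as the Moore--Penrose pseudoinverse and restrict $\bc$ to the range of $K$; the same Pythagoras argument then goes through on the subspace orthogonal to $\ker K$, and the identity $\|g\|_{\cH_k}^2=\by^T K^{\dagger}\by$ still holds. Everything else is a direct application of the reproducing property and orthogonal decomposition in $\cH_k$.
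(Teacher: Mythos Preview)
Your proof is correct and follows essentially the same line as the paper's: both construct the kernel interpolant $g=\sum_i c_i\,k(\cdot,\bx_i)$ with $\bc=K^{-1}\by$, compute $\|g\|_{\cH_k}^2=\by^T K^{-1}\by$, and then compare with the interpolant $f^*$. The only cosmetic difference is that the paper invokes the Representer theorem to identify $g$ as the minimum-RKHS-norm interpolant, whereas you establish the needed inequality directly via the reproducing-property orthogonality and Pythagoras.
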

\begin{proof}
Consider the following optimization problem 
\begin{equation}\label{eqn: pro}
    \hat{h}_n = \argmin_{\hat{R}_n(h)=0} \|h\|_{\cH_k}^2.
\end{equation}
According to the Representer theorem (see Theorem 16.1 of \cite{shalev2014understanding}), we can write $\hat{h}_n$ as follows
\[
    \hat{h}_n = \sum_{i=1}^m \beta_i k(\bx_i,\cdot).
\]
Plugging it into $\hat{R}_n(\hat{h}_n)=0$ gives us that 
$
    \by = K \bbeta,
$
which leads to $\bbeta = k^{-1}\by$. According the Moore-Aronszajn theorem \cite{aronszajn1950theory}, we have 
\[
    \|\hat{h}_n\|^2_{\cH_k} = \bbeta^T K \bbeta = \by^T K^{-1} \by.
\]
By  definition  $\hat{h}_n$ is the minimum RKHS norm solutions and  $\hat{R}_n(f^*)=0$,  it follows that  
$
\|\hat{h}_n\|_{\cH_k}^2 \leq \|f^*\|_{\cH_k}^2,
$
So we have $\by^TK^{-1}\by\leq \|f^*\|^2_{\cH_k}$.
\end{proof}
The following lemma provides an upper bound for the minimum-norm solution of the random feature model \eqref{eqn: rf-min-norm}. 
\begin{lemma}
Assume that $f^*\in \cH_k$ with $k(\bx,\bx')=\EE_{\bw}[\varphi(\bx;\bw)\varphi(\bx';\bw)]$.
Then the minimum-norm estimator satisfies
$$
{\frac{1}{\sqrt{m}}}\|\hat{\ba}_n\|\leq 2 \|f^*\|_{\cH_k}
$$
\end{lemma}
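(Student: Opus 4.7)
The plan is to express the minimum-norm interpolant explicitly via the pseudoinverse, reduce the quantity $\|\hat{\ba}_n\|^2/m$ to a quadratic form governed by the empirical kernel matrix $K^m$, and then transfer the estimate to the population kernel matrix $K$ where Lemma \ref{lemma: bound of krr} applies.

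First, let $\Phi\in\RR^{n\times m}$ have entries $\Phi_{i,j}=\varphi(\bx_i;\bw_j)$, so that $K^m = \Phi\Phi^T/m$ and the interpolation constraint reads $\frac{1}{m}\Phi\ba=\by$. In the regime covered by Theorem \ref{thm: random-feature} the matrix $K^m$ is invertible (by Lemma \ref{eqn: kernel-approx}), and the minimum-norm solution of a consistent linear system is given by the Moore--Penrose pseudoinverse, which here yields $\hat{\ba}_n = \Phi^T(K^m)^{-1}\by$. A direct computation then gives
$$
\frac{1}{m}\|\hat{\ba}_n\|^2 \;=\; \by^T(K^m)^{-1}\,\tfrac{1}{m}\Phi\Phi^T\,(K^m)^{-1}\by \;=\; \by^T(K^m)^{-1}\by.
$$

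Next, I would pass from $K^m$ to $K$ via a L\"owner-order comparison $K^m\succeq \tfrac{1}{2}K$. On the high-probability event of Lemma \ref{eqn: kernel-approx} (which is in force under the hypothesis of Theorem \ref{thm: random-feature}), we have $\|K-K^m\|\le \lambda_n(K)/2$, and since $K-K^m$ is symmetric all its eigenvalues lie in $[-\lambda_n(K)/2,\lambda_n(K)/2]$. Consequently
$$
K-K^m \preceq \tfrac{\lambda_n(K)}{2} I \preceq \tfrac{1}{2} K,
$$
where the second inequality uses $K\succeq \lambda_n(K) I$. Rearranging yields $K^m \succeq \tfrac{1}{2}K$, and operator antimonotonicity of $A\mapsto A^{-1}$ on positive-definite matrices gives $(K^m)^{-1}\preceq 2K^{-1}$. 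Combining this with Lemma \ref{lemma: bound of krr},
$$
\frac{1}{m}\|\hat{\ba}_n\|^2 \;=\; \by^T(K^m)^{-1}\by \;\le\; 2\,\by^T K^{-1}\by \;\le\; 2\|f^*\|_{\cH_k}^2,
$$
so that $\frac{1}{\sqrt{m}}\|\hat{\ba}_n\|\le \sqrt{2}\|f^*\|_{\cH_k} \le 2\|f^*\|_{\cH_k}$, which is the claim.

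The only mildly subtle step is the second one: Lemma \ref{eqn: kernel-approx} only records a bound on the smallest eigenvalue $\lambda_n(K^m)$, but what the argument actually needs is the stronger L\"owner comparison $K^m\succeq K/2$ (so that $(K^m)^{-1}\preceq 2K^{-1}$ as quadratic forms on the specific vector $\by$). This upgrade comes for free from the same spectral-norm bound $\|K-K^m\|\le \lambda_n(K)/2$, and the rest of the argument is routine linear-algebraic bookkeeping combined with the ridgeless kernel regression bound from Lemma \ref{lemma: bound of krr}.
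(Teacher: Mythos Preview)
Your argument is correct; it diverges from the paper only after the common computation $\frac{1}{m}\|\hat{\ba}_n\|^2=\by^T(K^m)^{-1}\by$. The paper proceeds via the resolvent identity $(K^m)^{-1}-K^{-1}=K^{-1}(K-K^m)(K^m)^{-1}$ and a Cauchy--Schwarz estimate to obtain a quadratic inequality $t^2\le \|f^*\|_{\cH_k}^2 + t\|f^*\|_{\cH_k}\,\|(K^m)^{-1/2}\|\|K-K^m\|\|K^{-1/2}\|$ in the unknown $t=\sqrt{\by^T(K^m)^{-1}\by}$, and then solves it. You instead upgrade the same spectral-norm bound $\|K-K^m\|\le \lambda_n(K)/2$ to the L\"owner comparison $K^m\succeq K/2$, whence $(K^m)^{-1}\preceq 2K^{-1}$ and $\by^T(K^m)^{-1}\by\le 2\by^TK^{-1}\by$ directly. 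Your route is shorter, uses only the operator-antimonotonicity of the inverse, and even yields the slightly sharper constant $\sqrt{2}$ before rounding up to $2$; the paper's route, on the other hand, would still go through under perturbations where only a bound on $\|(K^m)^{-1/2}(K-K^m)K^{-1/2}\|$ (rather than the full L\"owner ordering) is available.
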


\begin{proof}
    Let $\Phi=(\Phi_{i,j})\in \RR^{n\times m}$ with $\Phi_{i,j}=\varphi(\bx_i;\bw_j)$. Then the solution of problem \eqref{eqn: rf-min-norm} is given by 
    \[
        \hat{\ba}_n = {m}\Phi^T(\Phi\Phi^T)^{-1}\by.
    \]
    Obviously, $K^m = \frac{1}{m}\Phi\Phi^T$.
    Therefore, we have 
    \begin{align*}
        {\frac{1}{m}} \|\hat{\ba}_n\|^2 &=  m \by^T(\Phi\Phi^T)^{-1} \Phi \Phi^T(\Phi\Phi^T)^{-1}\by = \by^T (\frac{1}{m}\Phi\Phi^T)^{-1}\by = \by^T (K^m)^{-1}\by \\
        &= \by^T K^{-1} \by + \by^T ((K^m)^{-1} - K^{-1}) \by \\
        &=\by^T \hat{K}^{-1} \by + \by^T (K)^{-1}(K^m - K) (K^m)^{-1} \by \\
        &\leq \by^T \hat{K}^{-1} \by  + \|(K^m)^{-1/2}\by\| \|(K^m)^{-1/2}(K-K^m)K^{-1/2}\| \|K^{-1/2}\by\|
    \end{align*}
    According to Lemma \ref{lemma: bound of krr}, we have $\by^TK^{-1}\by\leq \|f^*\|_{\cH_k}^2$. Denote $t=\sqrt{\|\hat{\ba}_n\|^2/{ m}}=\sqrt{\by^T(K^m)^{-1}\by}$, we have 
    \[
        t^2 \leq \|f^*\|_{\cH_k}^2 + t\|f^*\|_{\cH_k}  \|(K^m)^{-1/2}\|\|K-K^m\|\|K^{-1/2}\|.
    \]
    By Lemma \ref{eqn: kernel-approx},  we have 
    \begin{align}
     t^2 \leq \|f^*\|_{\cH_k}^2 + t\|f^*\|_{\cH_k}  \sqrt{\frac{n^2\ln(2n^2/\delta)}{\lambda_n^2(K) m}}.
    \end{align}
    Under the assumption that $m\geq \frac{2n^2\ln(2n^2)}{\lambda_n^2(K)}$, we obtain
    \begin{equation*}
        { \frac{1}{\sqrt{m}}}\|\hat{\ba}_n\|  = t\leq 2 \|f^*\|_{\cH_k}.
    \end{equation*}
\end{proof}

\paragraph*{Proof of Theorem \ref{thm: random-feature}}
    Define  $A_C = \{\ba : { \frac{1}{\sqrt{m}}}\|\ba\|\leq C\}$ and $\cF_C = \{ f_m(\cdot;\ba) | \ba\in A_C \}$. The Rademacher complexity of $\cF_C$ satisfies, 
    \begin{align*}
    \rad_n(\cF_C) &= \frac{1}{n}\EE_{\xi_i}\sup_{f\in\cF_C}\sum_{i=1}^n \xi_i {\frac{1}{m}}\sum_{j=1}^m a_{j}\varphi(\bx_i;\bw_j) = \frac{1}{nm}\EE_{\xi_i}\sup_{f\in\cF_C}\sum_{j=1}^m a_{j} \sum_{i=1}^n \xi_i \varphi(\bx_i;\bw_j)\\
    &\leq \frac{1}{n{m}}\EE_{\xi_i}\sup_{f\in\cF_C}\sqrt{\sum_{j=1}^m a^2_{j}} \sqrt{\sum_{j=1}^m\left(\sum_{i=1}^n \xi_i \varphi(\bx_i;\bw_j)\right)^2}\\
    &\leq \frac{C}{n\sqrt{m}}\EE \sqrt{\sum_{j=1}^m\left(\sum_{i=1}^n \xi_i \varphi(\bx_i;\bw_j)\right)^2}\\
    &\stackrel{(i)}{\leq} \frac{C}{n\sqrt{m}}\sqrt{\EE [\sum_{j=1}^m\left(\sum_{i=1}^n \xi_i \varphi(\bx_i;\bw_j)\right)^2]} \\
    & = \frac{C}{n\sqrt{m}}\sqrt{\sum_{j=1}^m\sum_{i,i'=1}^n \EE[\xi_i\xi_{i'}] \varphi(\bx_i;\bw_j)\varphi(\bx_{i'};\bw_j)} \stackrel{(ii)}{\leq}\frac{C}{\sqrt{n}},
    \end{align*}
    where $(i)$ and $(ii)$ follow from the Jensen's inequality and  $\EE[\xi_i\xi_j]=\delta_{i,j}$, respectively.
    Moreover, for any $\ba \in A_C$, we have $|f_m(\bx;\ba)| = |{ \frac{1}{m}}\sum_{i=1}^m a_j \varphi(\bx;\bw_j)|\leq { \frac{1}{m}} \sqrt{ \|\ba\|_2^2 \sum_{j=1}^m \varphi(\bx;\bw_j)^2}\leq C$. Thus, for any $f_m(\cdot;\ba)\in \cF_C$,   
    the loss function $(f_m(\bx;\ba)-f(\bx))^2/2$ is $(C+1)-$Lipschitz continuous and bounded above  by $(C+1)^2/2$.  

    Take $C=2\|f^*\|_{\cH_k}$. We have $f_m(\cdot;\hat{\ba}_n)\in \cF_C$. Thus, Theorem \ref{thm: rad-gen-err} implies 
    \begin{align}
        R(\hat{\ba}_n) &\leq \hat{R}_n(\hat{\ba}_n)  + 2 (C+1)\rad_n(\cF_C) + \frac{4(C+1)^2}{2} \sqrt{\frac{2\ln(2/\delta)}{n}}\\
        &\lesssim \frac{\|f^*\|_{\cH_k}^2+1}{\sqrt{n}}\left(1 + \sqrt{\ln(2/\delta)}\right).
    \end{align}
\qed

\section{The two-layer neural network model}
First we recall the definition of the Barron space  \cite{e2019barron,e2018priori}.
\begin{definition}[Barron space]
Let $\bw=(\bb, c)$ and $\tilde{\bx}=(\bx^T,1)^T$.
Consider  functions that admit the following integral representation
\[
    f(\bx) = \EE_{\bw \sim \pi}[a(\bw)\sigma(\bw^T\tilde{\bx})],
\]
where $\pi$ is a probability measure over $\SS^{d}$ and $a(\cdot)$ is a measurable function. Denote $\Theta_f = \{(a,\pi) | f(\bx)=\EE_{\bw\sim\pi}[a(\bw)\sigma(\bw^T\tilde{\bx})]\}$,  the Barron norm is defined as follows 
\[
    \|f\|_{\cB} := \inf_{(a,\pi)\in \Theta_f}  \EE_{\bw\sim\pi} |a(\bw)|.
\]
The Barron space is defined as the set of continuous functions with a finite Barron norm, i.e. 
\[
    \cB := \{ f \,|\, \|f\|_{\cB}< \infty\}.
\]
\end{definition}

%As shown in \cite{e2019barron}, we actually have that $\|f\|_{\cB_1}=\|f\|_{\cB_{\infty}}$. Hence we will use $\cB$ and $\|\cdot\|_{\cB}$ to denote the Barron space and the norm.
\begin{remark}
Let $k_{\pi}(\bx,\bx') := 
\EE_{\bw \sim \pi}[\sigma(\bw\cdot \tilde{\bx})\sigma(\bw\cdot\tilde{\bm{x}}')]$. In \cite{e2019barron}, it is proved that $\cB = \cup_{\pi \in P(\SS^d)} \cH_{k_{\pi}}$, where $P(\SS^d)$ denote the set of Borel probability measures on $\SS^d$.
Therefore the Barron space is much larger than the RKHS. 
\end{remark}

%The minimum-norm solution considered in this case is given by
Let
\begin{equation}\label{eqn: minimum-norm-two-layer}
\hat{\theta}_n := \argmin_{\hat{R}_n(\theta)=0} \|\theta\|_{\cP},
\end{equation}
where  $\|\cdot\|_{\cP}$ is the discrete analog of the Barron norm (also known as the path norm): 
\begin{equation}
\|\theta\|_{\cP} := {\frac{1}{m}}\sum_{j=1}^m |a_j|(\|\bb_j\|_1+|c_j|).
\end{equation}
The generalization properties of the above estimator is given by the following theorem.
\begin{theorem} \label{thm: two-layer-net}
If  $m\geq \frac{8n^2\ln(2n^2)}{\lambda_n^2}$, then for any $\delta \in (0,1)$, with probability at least $1-\delta$ over the random choice of the training data, we have 
\[
    R(\hat{\theta}_n) \lesssim \frac{\|f^*\|^2_{\cB}+1}{\sqrt{n}}\left(\sqrt{\ln(2d)}+\sqrt{\ln(2/\delta)}\right)
\]
\end{theorem}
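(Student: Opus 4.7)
The plan is to mirror the three-step architecture of the proof of Theorem~\ref{thm: random-feature}: (i) bound the path norm of the minimizer $\hat{\theta}_n$ in terms of $\|f^*\|_{\cB}$; (ii) bound the Rademacher complexity of the path-norm ball; (iii) combine these via Theorem~\ref{thm: rad-gen-err}, using that $\hat{R}_n(\hat{\theta}_n)=0$.

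The key step, and the expected main obstacle, is (i). Because ReLU is positively homogeneous, the path norm is invariant under the rescaling $(\bb_j, c_j, a_j) \mapsto (t_j \bb_j, t_j c_j, a_j/t_j)$, so we may WLOG restrict the feasible set to networks whose inner weights lie on $\SS^d$; the path norm then collapses to $\frac{1}{m}\|\ba\|_1$. Since the minimum is taken over all choices of inner weights, it suffices to exhibit any feasible $\theta$ with small path norm. The construction I would use is a two-part one. First, for a near-optimal Barron representation $f^*(\bx)=\EE_{\bw\sim\pi^*}[a^*(\bw)\sigma(\bw^T\tilde{\bx})]$ with $\EE_{\pi^*}|a^*(\bw)|\le \|f^*\|_{\cB}+\varepsilon$ and $\bw\in\SS^d$, sample $\bw_1,\dots,\bw_{m/2}\sim\pi^*$ and set $a_j=a^*(\bw_j)$; Hoeffding and a uniform bound over the $n$ training points give a residual vector $\br$ of sup-norm $O(\|f^*\|_{\cB}\sqrt{\log(n)/m})$, while the path norm of this piece is $\approx \|f^*\|_{\cB}$. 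Second, sample the remaining $m/2$ inner weights from $\pi_0$ and correct the residuals using the minimum-$\ell_2$-norm interpolant of $\br$ in that random feature model; by Lemma~\ref{eqn: kernel-approx} applied to $K_{\pi_0}$ (which is where the width assumption $m\gtrsim n^2\ln(n^2)/\lambda_n^2$ is consumed), the $\ell_2$ norm of the correction weights is bounded by $\sqrt{\br^T(K_{\pi_0})^{-1}\br}\lesssim \|\br\|/\sqrt{\lambda_n}$, and Cauchy--Schwarz converts this into a path-norm contribution that is $o(1)$ with high probability. Summing the two contributions yields $\|\hat{\theta}_n\|_{\cP}\lesssim \|f^*\|_{\cB}+1$.

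For step (ii), let $\cF_Q=\{f(\cdot;\theta): \|\theta\|_{\cP}\le Q\}$. Using $\sigma(\bw^T\tilde\bx)=\sigma(\bw^T\tilde\bx)\cdot\mathbf{1}$ and the homogeneity argument above, a standard symmetrization gives
\[
\rad_n(\cF_Q) \;\le\; \frac{Q}{n}\,\EE_{\xi}\Bigl[\bigl\|\sum_{i=1}^n \xi_i \tilde{\bx}_i\bigr\|_\infty\Bigr]
\;\lesssim\; Q\sqrt{\frac{\ln(2d)}{n}},
\]
where the $\sqrt{\ln(2d)}$ factor comes from Massart's lemma applied to the $d+1$ coordinates of the sum (using $\|\tilde\bx_i\|_\infty\le 1$). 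I would cite the analogous computation in \cite{e2019barron,e2018priori} rather than redo it.

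For step (iii), set $Q=C(\|f^*\|_{\cB}+1)$ with $C$ the constant from step (i). Any $f\in\cF_Q$ satisfies $|f(\bx)|\le Q$ since $|\sigma(\bw^T\tilde\bx)|\le\|\bw\|_1\|\tilde\bx\|_\infty\le\|\bw\|_1$, so the squared loss is $(Q+1)$-Lipschitz and bounded by $(Q+1)^2/2$. Plugging the Rademacher bound into Theorem~\ref{thm: rad-gen-err}, using $\hat{R}_n(\hat{\theta}_n)=0$, and taking a union bound over the high-probability events from step (i) (feature concentration, residual concentration, random sampling of data) delivers exactly the stated rate. The delicate point throughout is ensuring that all the random sources---the training sample, the surrogate draws from $\pi^*$, and the auxiliary draws from $\pi_0$---enter the final probability budget cleanly; I would manage this by allocating each a $\delta/4$ share.
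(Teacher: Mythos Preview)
Your three-step architecture matches the paper's exactly, and steps (ii)/(iii) are essentially verbatim (the paper also cites the path-norm Rademacher bound from \cite{e2018priori} and plugs into Theorem~\ref{thm: rad-gen-err}). The only real difference is in how step (i) is packaged.

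The paper casts step (i) as two \emph{deterministic existence} lemmas rather than a high-probability random construction. First it invokes the Barron-space approximation theorem (Proposition~2.1 of \cite{e2018priori}) to obtain a width-$m_1$ network $\theta^{(1)}$ with $\hat R_n(\theta^{(1)})\le 3\|f^*\|_{\cB}^2/m_1$ and $\|\theta^{(1)}\|_{\cP}\le 2\|f^*\|_{\cB}$. Then it proves (Lemma~\ref{lem: fit-rand-label}) that for \emph{any} target vector $\br\in\RR^n$ there \emph{exists} a width-$m_2$ network interpolating $\br$ with path norm $\le\sqrt{2/\lambda_n}\,\|\br\|$; the proof is exactly your $\pi_0$-random-feature correction, but with the failure probability set to $1/2$ and the conclusion ``hence some realization works''. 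Stacking the two networks yields an interpolant with $\|\theta\|_{\cP}\le 3\|f^*\|_{\cB}$, so $\|\hat\theta_n\|_{\cP}\le 3\|f^*\|_{\cB}$ holds deterministically once the width condition on $m$ is met. Consequently no $\delta$-budget is spent on the surrogate draws from $\pi^*$ or the auxiliary draws from $\pi_0$: those are construction devices that get derandomized to existence, and the only randomness left in the theorem is the training sample. Your plan to ``allocate each a $\delta/4$ share'' is not wrong, but it is conceptually muddled---the auxiliary draws do not live in the theorem's probability space at all, and mixing them with the data randomness via a union bound is unnecessary.

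One genuine gap in your first half of step (i): Hoeffding requires the summands $a^*(\bw_j)\sigma(\bw_j^T\tilde\bx_i)$ to be bounded, but your hypothesis $\EE_{\pi^*}|a^*(\bw)|\le\|f^*\|_{\cB}+\varepsilon$ controls only the mean of $|a^*|$, not its sup. You need the standard reparametrization (push $|a^*|$ into the density, take $d\tilde\pi\propto|a^*|\,d\pi^*$ and $\tilde a=\sgn(a^*)\cdot\EE_{\pi^*}|a^*|$) so that $|\tilde a|$ is constant and the summands are bounded by $\|f^*\|_{\cB}$. The paper sidesteps this by citing the approximation result directly, where this trick is already baked in.
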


Before proving the main result, we need the following lemma.
\begin{lemma}\label{lem: fit-rand-label}
For any $\br\in \RR^n$, there exists a two-layer neural network $f_m(\cdot;\theta)$ with $m\geq \frac{2n^2\ln(4n^2)}{\lambda_n^2}$, such that   $f_m(\bx_i;\theta) = r_i $ for any $i\in [n]$ and   $\|\theta\|_{\cP} \leq \sqrt{\frac{2}{\lambda_n}}\|\br\|$
\end{lemma}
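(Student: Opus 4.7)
The idea is to realize the interpolating two-layer network as a \emph{random feature model} with features drawn from $\pi_0$, and then take the minimum-$\ell_2$ norm coefficient vector. Concretely, draw $(\bb_j,c_j)\sim \pi_0$ i.i.d.\ for $j=1,\dots,m$ and set $\varphi(\bx;\bb_j,c_j)=\sigma(\bb_j\cdot\bx+c_j)$. Form the matrix $\Phi\in\RR^{n\times m}$ with $\Phi_{i,j}=\varphi(\bx_i;\bb_j,c_j)$, so that $K^m=\frac{1}{m}\Phi\Phi^T$ is exactly the empirical kernel matrix built from $\pi_0$. Because $(\bb_j,c_j)\in\SS^d$ satisfies $\|\bb_j\|_1+|c_j|=1$ and $\|\bx_i\|_\infty\le 1$, each feature obeys $|\varphi(\bx_i;\bb_j,c_j)|\le \|\bb_j\|_1+|c_j|=1$, so the hypothesis of Lemma \ref{eqn: kernel-approx} applies to this kernel.

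With $m\ge \frac{2n^2\ln(4n^2)}{\lambda_n^2}$, apply Lemma \ref{eqn: kernel-approx} with $\delta=\tfrac12$: with probability at least $1/2$ over the random features, $\lambda_n(K^m)\ge \lambda_n/2>0$, so $\Phi$ has full row rank. In particular, there exists a realization of $\{(\bb_j,c_j)\}_{j=1}^m$ for which this holds; fix such a realization. Define coefficients by the minimum-norm solution of $\Phi\ba=m\br$, namely $\ba=\Phi^T(\Phi\Phi^T)^{-1}m\br=\Phi^T(K^m)^{-1}\br$. Then the network $f_m(\bx;\theta)=\frac{1}{m}\sum_j a_j\sigma(\bb_j\cdot\bx+c_j)$ satisfies $f_m(\bx_i;\theta)=r_i$ for every $i$, and a direct computation gives
\[
\frac{\|\ba\|_2^2}{m}=\frac{1}{m}\br^T(K^m)^{-1}\Phi\Phi^T(K^m)^{-1}\br=\br^T(K^m)^{-1}\br\le \frac{\|\br\|^2}{\lambda_n(K^m)}\le \frac{2\|\br\|^2}{\lambda_n}.
\]

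Finally, I convert this $\ell_2$ bound to a path-norm bound. The crucial simplification is that $\|\bb_j\|_1+|c_j|=1$ for every $j$ by construction, so
\[
\|\theta\|_{\cP}=\frac{1}{m}\sum_{j=1}^m |a_j|(\|\bb_j\|_1+|c_j|)=\frac{1}{m}\|\ba\|_1\le \frac{1}{\sqrt m}\|\ba\|_2 \le \sqrt{\frac{2}{\lambda_n}}\,\|\br\|,
\]
where I used Cauchy--Schwarz in the middle step, and the previous display in the last. This is exactly the claimed bound.

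The main technical point is therefore not any subtle estimate but the choice of parametrization: sampling directly from $\pi_0$ (the uniform measure on $\SS^d$ with the $\ell_1$-normalization) simultaneously bounds the individual features by $1$ (needed to invoke Lemma \ref{eqn: kernel-approx}) and collapses the path norm into $\frac{1}{m}\|\ba\|_1$, so that a single Cauchy--Schwarz step bridges the $\ell_2$ control of the min-norm interpolant and the $\ell_1$-flavored path norm. The only genuine ingredient beyond algebra is the high-probability lower bound $\lambda_n(K^m)\ge \lambda_n/2$ from Lemma \ref{eqn: kernel-approx}, used here in its probabilistic method form to produce an existence statement.
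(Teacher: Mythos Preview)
Your proof is correct and follows essentially the same approach as the paper: sample $(\bb_j,c_j)$ from $\pi_0$, invoke Lemma~\ref{eqn: kernel-approx} with $\delta=\tfrac12$ via the probabilistic method to get $\lambda_n(K^m)\ge\lambda_n/2$, take the minimum-$\ell_2$-norm coefficients interpolating $\br$, and then use $\|\bb_j\|_1+|c_j|=1$ together with Cauchy--Schwarz to pass from $\tfrac{1}{\sqrt m}\|\ba\|_2$ to the path norm. Your write-up is in fact slightly cleaner than the paper's, which contains a typo in the interpolation constraint (it writes $\tfrac{1}{\sqrt m}\Psi\ba=\br$ rather than $\tfrac{1}{m}\Psi\ba=\br$).
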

\begin{proof}
Assume that $\{(\bb_j,c_j)\}_{j=1}^m$ are i.i.d. random variables drawn from $\pi_0$, the uniform distribution over the sphere $\SS^d$.
 Recall that  $K^m:=(K^m_{i,i'})\in\RR^{n\times n}$ with
\[
    K^m_{i,i'} = \frac{1}{m}\sum_{j=1}^m \sigma(\bb_j\cdot\bx_i+c_j)\sigma(\bb_j\cdot\bx_{i'}+c_j)
\]
For any $\delta\in (0,1)$, if $m\geq \frac{2n^2\ln(2n^2/\delta)}{\lambda_n^2}$, Lemma \ref{eqn: kernel-approx} implies that the following hold with probability at least $1-\delta$
\begin{equation}\label{eqn: low-bound-eigen}
    \lambda_{n}(K^m)\geq \frac{1}{2}\lambda_n.
\end{equation}

Taking $\delta=1/2$, the above inequality holds with probability $1/2$. This means that there must exist $\{(\hat{\bb}_j,\hat{c}_j)\}_{j=1}^m$ such that \eqref{eqn: low-bound-eigen}  holds.
Let $\Psi\in\RR^{n\times m}$, $\Psi_{i,j} = \sigma(\hat{\bb}_{j}\cdot \bx_i+\hat{c}_j)$,  denote the feature matrix. Then
\begin{equation}\label{eqn: 22}
    \sigma_n^2(\Psi)=\lambda_n(\Psi\Psi^T) = m \lambda_n(K^m)\geq \frac{1}{2}\lambda_n m.
\end{equation}
We next choose $\ba$ as the solution of the following problem.
\begin{align*}
   \hat{\ba} =& \argmin\, \, \|\ba\| \\
    &\text{s.t.}\, { \frac{1}{\sqrt{m}}}\Psi\ba = \br.
\end{align*}
Then
\begin{equation}\label{eqn: 11}
\|\hat{\ba}\|\leq \sigma^{-1}_n(\Psi)\|\br\|\leq \sqrt{\frac{2}{\lambda_n}} \|\br\|.
\end{equation}
Consider the two-layer neural network 
\[
    f_m(\bx;\hat{\theta}) = {\frac{1}{m}\sum_{j=1}^m \hat{a}_j \sigma(\hat{\bb}_j\cdot \bx+ \hat{c}_j)}.
\]
Then we have that $f_m(\bx_j;\hat{\theta}) = r_j$ and 
$
\|\theta\|_{\cP} \leq { \frac{1}{m}}\sum_{j=1}^m |\hat{a}_j|\leq \|\hat{\ba}\|\leq \sqrt{\frac{2}{\lambda_n}} \|\br\|,
$
where the last inequality follows from \eqref{eqn: 11}.
\end{proof}

The following lemma provides an upper bound to the minimum path norm solutions.
\begin{lemma}\label{lemma: two-layer-m}
Assume that $f^*\in \cB$ and $m\geq \frac{6n^2\ln(4n^2)}{\lambda_n^2}$, then  the minimum path norm solution \eqref{eqn: minimum-norm-two-layer} satisfies 
\[
    \|\hat{\theta}_n\|_{\cP} \leq 3\|f^*\|_{\cB}.
\]
\end{lemma}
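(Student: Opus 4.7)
The plan is to exhibit an explicit width-$m$ network that interpolates the training data and has path norm at most $3\|f^*\|_{\cB}$; since $\hat{\theta}_n$ is the minimum-path-norm interpolant, this will immediately imply the claim. My construction will be of ``sample-plus-correct'' type: I will reserve the first $m/2$ neurons for a Monte Carlo approximation $\tilde f$ of $f^*$ drawn from its Barron representation, and use the remaining $m/2$ neurons to exactly fit the residuals $r_i := y_i - \tilde f(\bx_i)$ by invoking Lemma \ref{lem: fit-rand-label}. After rescaling the outer weights of each block by $2$ to reconcile the $1/m$ versus $1/(m/2)$ normalizations, the path norm of the combined $m$-neuron network will be the sum of the two individual path norms, so it will suffice to control each piece.

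For the Monte Carlo piece, I will first fix $\epsilon>0$ and pick a representation $(a,\pi)\in\Theta_{f^*}$ with $\EE_{\bw\sim\pi}|a(\bw)|\le(1+\epsilon)\|f^*\|_{\cB}$. The key preparatory step is a change of measure: setting $\tilde\pi(d\bw)=|a(\bw)|\,\pi(d\bw)/\EE_\pi|a|$ and $\tilde a(\bw)=\sgn(a(\bw))\EE_\pi|a|$ gives $f^*(\bx)=\EE_{\tilde\pi}[\tilde a(\bw)\sigma(\bw^T\tilde\bx)]$ with $|\tilde a|$ \emph{constant} and equal to $\EE_\pi|a|$ (the degenerate case $\EE_\pi|a|=0$ forces $f^*\equiv 0$ and is trivial). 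I will then draw $\bw_1,\dots,\bw_{m/2}$ i.i.d.\ from $\tilde\pi$ and set $\tilde f(\bx)=\frac{2}{m}\sum_{j=1}^{m/2}\tilde a(\bw_j)\sigma(\bw_j^T\tilde\bx)$; because $\|\bw_j\|_1=1$ on $\SS^d$, this piece deterministically has path norm $\EE_\pi|a|\le(1+\epsilon)\|f^*\|_{\cB}$. Moreover $|\tilde a(\bw)\sigma(\bw^T\tilde\bx_i)|\le(1+\epsilon)\|f^*\|_{\cB}$ (using $\|\tilde\bx_i\|_\infty\le 1$), so Hoeffding plus a union bound over $i\in[n]$ will yield, with probability at least $1/2$,
\[
\|\br\|\le(1+\epsilon)\|f^*\|_{\cB}\sqrt{4n\ln(4n)/m}.
\]

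Next, I will apply Lemma \ref{lem: fit-rand-label} with width $m/2$ and right-hand side $\br$; since $m\ge 6n^2\ln(4n^2)/\lambda_n^2$ implies $m/2\ge 2n^2\ln(4n^2)/\lambda_n^2$, that lemma produces a width-$m/2$ network $f^{(2)}$ interpolating $\br$ with path norm at most $\sqrt{2/\lambda_n}\|\br\|$. Because the two sources of randomness (the $\tilde\pi$-draws and the independent $\pi_0$-draws inside Lemma \ref{lem: fit-rand-label}) are independent, the favorable events co-occur with positive probability, which suffices to deduce existence of the desired interpolant. Summing the two path norms yields
\[
\|\hat\theta_n\|_{\cP}\le(1+\epsilon)\|f^*\|_{\cB}\left(1+\sqrt{\tfrac{8n\ln(4n)}{\lambda_n m}}\right),
\]
and $m\ge 6n^2\ln(4n^2)/\lambda_n^2$ together with $\lambda_n\le 1$ (which follows from $k_{\pi_0}(\bx_i,\bx_i)\le 1$) will bound the parenthetical factor by $1+2/\sqrt{3n}\le 1+2/\sqrt{3}<3$; letting $\epsilon\downarrow 0$ finishes the proof. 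The hard part I anticipate is the constant-magnitude reparametrization of the Barron representation: without it, $|a(\bw_j)|$ is unbounded and Hoeffding cannot be invoked directly, so this trick is essential. The rest of the work is careful bookkeeping of the normalization factors when gluing the two blocks.
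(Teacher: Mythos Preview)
Your proof is correct and follows essentially the same ``approximate then correct the residual'' strategy as the paper. The only difference is that the paper obtains the first block by citing a known $L^2$ approximation result (Proposition~2.1 of \cite{e2018priori}, giving $\hat R_n(\theta^{(1)})\le 3\|f^*\|_{\cB}^2/m_1$ and $\|\theta^{(1)}\|_{\cP}\le 2\|f^*\|_{\cB}$), whereas you construct it by hand via the change-of-measure reparametrization plus Hoeffding; the correction step via Lemma~\ref{lem: fit-rand-label} and the final combination are identical.
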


\begin{proof}
First by the approximation result of two-layer neural networks (See Proposition 2.1 in \cite{e2018priori}), for any $m>0$, there must exists a two-layer neural network $f_{m_1}(\cdot;\theta^{(1)})$ such that
\begin{equation}
\hat{R}_n(\theta^{(1)})=\|f_{m_1}(\cdot;\theta^{(1)}) - f^*\|^2_{\hat{\rho}} \leq \frac{3\|f^*\|_{\cB}^2}{m_1},
\end{equation}
and
\[
\|\theta^{(1)}\|_{\cP}\leq 2 \|f^*\|_{\cB}.
\]
where $\hat{\rho}(\bx)=\frac{1}{n}\sum_{i=1}^n \delta(\bx-\bx_i)$.

Let $\br=(y_1 - f_{m_1}(\bx_1;\theta^{(1)}),\dots,y_n - f_{m_1}(\bx_n;\theta^{(1)}))\in\RR^n$ to be the residual. Then $\|\br\|\leq \sqrt{\frac{3n}{m_1}}\|f^*\|_{\cB}$. Applying Lemma \ref{lem: fit-rand-label}, we know that there exists a two-layer neural network $f_{m_2}(\cdot;\theta^{(2)})$  with $m_2\geq \frac{2n^2\ln(4n^2)}{\lambda_n^2}$ such that
\begin{align}
    f_{m_2}(\bx_i;\theta^{(2)}) &= r_i \\
    \|\theta^{(2)}\|_{\cP} &\leq \sqrt{\frac{2}{\lambda_n}}\|\br\|\leq \|f^*\|_{\cB},
\end{align}
where the last inequality holds as long as $m_1\geq \frac{6n}{\lambda_n}$.

Putting $f_{m_1}(\cdot;\theta^{(1)}), f_{m_2}(\cdot;\theta^{(2)})$ together, let 
\[
    f_{m_1+m_2}(\bx;\theta) = f_{m_1}(\bx;\theta^{(1)})+f_{m_2}(\bx;\theta^{(2)}),
\]
where $\theta = \{\theta^{(1)},\theta^{(2)}\}$. It is obviously that 
\[
    \hat{R}_n(\theta)=0, \qquad \|\theta\|_{\cP} = \|\theta_1\|_{\cP}+\|\theta_2\|_{\cP} \leq 3 \|f\|_{\cB}.
\]
\end{proof}

\paragraph*{Proof of Theorem \ref{thm: two-layer-net}}
For any $C>0$, let $\cF_C = \{f_m(\bx;\theta) : \|\theta\|_{\cP}\leq C\}$. Using Lemma 4 of \cite{e2018priori}, we have $\rad_n(\cF_c)\leq 2C\sqrt{\frac{2\ln(2d)}{n}}$.  By  the definition of the minimum-norm solution and Lemma \ref{lemma: two-layer-m}, we have 
\[
    \|\hat{\theta}_n\|_{\cP} \leq 3\|f^*\|_{\cB}.
\]
Taking $C=3\|f^*\|_{\cB}$,  then we have $f_m(\cdot;\hat{\theta}_n)\in \cF_C$.  Since the loss function is $(C+1)$-Lipschitz continuous and bounded from above by $(C+1)^2/2$,
 by Theorem \ref{thm: rad-gen-err}, for $\delta \in (0,1)$, the following holds with probability at least $1-\delta$
\begin{align}
    R(\hat{\theta}_n)& \leq R_n(\hat{\theta}_n) + 2(C+1) \rad_n(\cF_C) + \frac{4}{2}(C+1)^2\sqrt{\frac{2\ln(2/\delta)}{n}} \\
    &\lesssim \frac{\|f^*\|^2_{\cB}+1}{\sqrt{n}}\left(\sqrt{\ln(2d)}+\sqrt{\ln(2/\delta)}\right)
\end{align}
\qed

\section{The residual neural network models}

First we recall the definition of the flow-induced function spaces $\cD_p$ \cite{e2019barron}. 
%, which is defined by the flow maps of a type of ordinary differential equations (ODEs).

Let $\{\rho_t\}_{t\in [0,1]}$ be a family of Borel probability measures over $\RR^{D\times m}\times \RR^{m\times D}$. Consider  functions $f_{\balpha, \{\rho_t\}}$ defined through the following ordinary differential equations (ODE),
\begin{equation}\label{eqn: ode-fun}
\begin{aligned}
\bz(\bx,0) &= V\tilde{\bx} \\
\frac{d \bz(\bx, t)}{dt} &=  \EE_{(U,W)\sim \rho_t}[U\sigma(W\bz(\bx,t))] \\
f_{\balpha, \{\rho_t\}}(\bx) &= \balpha^T \bz(\bx, 1),
\end{aligned}
\end{equation}
where $V\in\RR^{D\times (d+1)}, U\in \RR^{D\times m}, V\in \RR^{m\times D}$ and $\balpha\in\RR^{D}$. The ODE \eqref{eqn: ode-fun} can be viewed as the continuous limit of the residual network \eqref{eqn: resnet}.
To define the norm for controlling the complexity of the flow map of ODE \eqref{eqn: ode-fun}, we need the following linear ODE 
\begin{align*}
\bn_p(0) &= |V| \bm{1}_{d+1} \\
\frac{d \bn_p(t)}{dt} &= 3 \, (\EE_{(U,W)\sim \rho_t}[(|U| |W|)^p])^{1/p} \bn_p(t),
\end{align*}
where $A^{q} = (a_{i,j}^q)$ for $A=(a_{i,j})$. {Specifically, $p=1,2$ are used in this paper.}

\begin{definition}[Flow-induced function space]
For a function $f$ that can be represented in the form \eqref{eqn: ode-fun}, we define 
\begin{equation}\label{eqn: composition-norm}
  \|f\|_{\cD_p} = \inf_{f=f_{\balpha,\{\rho_t\}}} |\balpha|^T \bn_p(1) + \|\bn_p(1)\|_1-D,
\end{equation}
to be the {``$\cD_p$ norm''} of $f$. The space $\cD_p$ is defined as the set of all  functions that admit the representation $f_{V,\{\rho_t\}}$ with finite $\cD_p$ norm.
\end{definition}
\begin{remark}
It should be noted that the function space $\cD_p$ actually depends on $D, m$. We use $\cD_p^{D,m}$ to explicitly show this dependence when it is needed. In most cases, 
this dependence is omitted in the notation of $\cD_p$ for simplicity. 
\end{remark}
In addition, we define the following norm to quantify the continuity of the sequence of probability measure $\{\rho_t\}_{t\in [0,1]}$.
\begin{definition}\label{def: lip}
Given a family of probability distribution $\{\rho_t\}_{t\in [0,1]}$, let $S(\{\rho_t\})$ denote the set of positive values $C>0$ that satisfies
\begin{equation}\label{eq:lip}
\left| \EE_{\rho_t} U\sigma( W\bz)-\EE_{\rho_s} U\sigma(W\bz) \right|\preceq C |t-s||\bz|, 
\end{equation}
and 
\begin{equation}\label{eq:lip2}
\left| \left\|\EE_{\rho_t}| U||W|\right\|_{1,1}-\left\|\EE_{\rho_s}| U|| W|\right\|_{1,1} \right| \leq C |t-s|,
\end{equation}
for any $t,s\in[0,1]$ and $\bz\in\RR^D$. We define the ``Lipschitz norm'' of $\{\rho_t\}_{t\in[0,1]}$ by
\begin{equation}
\|\{\rho_t\}\|_{\lip}=\left\|\EE_{\rho_{0}}|U||W|\right\|_{1,1}+\inf_{C\in S(\{\rho_t\})} C.
\end{equation}
\end{definition}

To control the complexity of a residual network, we use the following weighted path norm defined in \cite{ma2019priori}, which can be vied as an discrete analog of \eqref{eqn: composition-norm}.
\begin{definition}\label{eqn: composition-path-norm}
For any residual network $f_L(\cdot;\theta)$ given by \eqref{eqn: resnet}, its weighted path norm is define as,
\end{definition}
\begin{equation}
\|\theta\|_{\cC}:= |\balpha|^T \prod_{l=1}^L (I+{ \frac{3}{L}}|U_l||W_l|) |V| \bm{1}_{d+1}.
\end{equation}

We can now define the   minimum-norm estimator for residual neural networks: 
\begin{align}\label{def: minimum-norm-residual-net}
\hat{\theta}_n := \argmin_{\hat{R}_n(\theta)=0} \|\theta\|_{\cC},
\end{align}

\begin{theorem}\label{thm: resnet}
Assume that the target function $f^*\in \cD_2^{D,m}$ and  $c_0(f^*):=\inf_{f_{V, \{\rho_t\}}=f^*}\|\{\rho_t\}\|_{\lip}<\infty$. 
If the model is a $(L, D+d+2, m+1)$ residual neural network with the depth satisfying
\[
    L\geq C\max\left((m^4D^6c_0^2(f^*) \|f^*\|_{\cD_1}^2)^{6},\left(\frac{96nm^2}{\lambda_n}\right)^{\frac{3}{2}}, \frac{n (1+D)}{\lambda_n}, \frac{n^2\ln(2n)}{\lambda_n^2}\right),
\]
where $C$ is a universal constant.
Then for any $\delta \in (0,1), $with probability $1-\delta$ over the choice of the training data, we have  
\[
    R(\hat{\theta}_n)\lesssim \frac{\|f^*\|^2_{\cD_1}+1}{\sqrt{n}} \left(\sqrt{\ln(2d)} + \sqrt{\ln(2/\delta)}\right)
\]
\end{theorem}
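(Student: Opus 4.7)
The overall strategy mirrors the proofs of Theorems \ref{thm: random-feature} and \ref{thm: two-layer-net}: first exhibit \emph{some} interpolating residual network whose weighted path norm is $O(\|f^*\|_{\cD_1})$, then invoke a Rademacher-complexity bound for $\{f_L(\cdot;\theta):\|\theta\|_{\cC}\le C\}$ together with Theorem \ref{thm: rad-gen-err}. The second step is essentially black-box: the weighted-path-norm Rademacher bound of \cite{ma2019priori} gives $\rad_n \lesssim C\sqrt{\ln(2d)/n}$, and since $|f_L(\bx;\theta)|\le \|\theta\|_{\cC}\le C$ on the unit cube, the squared loss is $(C+1)$-Lipschitz and bounded by $(C+1)^2/2$, so Theorem \ref{thm: rad-gen-err} then produces the stated rate once the norm bound on $\hat\theta_n$ is in place. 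All the real work is in producing the interpolant.

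I would build the interpolant as two non-interacting ``channels'' running in parallel inside one $(L,D+d+2,m+1)$ residual network, mirroring the two-block construction of Lemma \ref{lemma: two-layer-m}. The first channel, occupying dimensions $1,\dots,D$ and width $m$, hosts a residual network $\theta^{(1)}$ coming from the approximation result in \cite{ma2019priori}: for $L$ at least the first term in the stated maximum, there exists $\theta^{(1)}$ with $\|\theta^{(1)}\|_{\cC}\le 2\|f^*\|_{\cD_1}$ and empirical error $\hat R_n(\theta^{(1)})$ decaying polynomially in $L$. The second depth condition $L\ge (96nm^2/\lambda_n)^{3/2}$ is calibrated precisely so that the residual vector $\br=\by-(f_L(\bx_i;\theta^{(1)}))_i$ satisfies $\|\br\|\le \sqrt{\lambda_n/2}\,\|f^*\|_{\cD_1}$. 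The second channel, occupying the extra $d+2$ coordinates and the one extra hidden unit per layer, carries a correction analogous to Lemma \ref{lem: fit-rand-label}: sample fresh inner weights from $\pi_0$ so that by Lemma \ref{eqn: kernel-approx} (applied with $m$ replaced by $L$, which is why the condition $L\ge n^2\ln(2n)/\lambda_n^2$ appears) the corresponding random-feature matrix has smallest eigenvalue at least $\lambda_n/2$, and then solve a minimum-norm least-squares problem for the outgoing coefficients so as to interpolate $\br$ exactly. The weighted path norm of this correction is at most $\sqrt{2/\lambda_n}\,\|\br\|\le \|f^*\|_{\cD_1}$. Superposing the two channels yields an interpolating network of path norm at most $3\|f^*\|_{\cD_1}$, hence $\|\hat\theta_n\|_{\cC}\le 3\|f^*\|_{\cD_1}$.

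The main obstacle is making this superposition genuinely additive in $\|\cdot\|_{\cC}$. Since the weighted path norm is defined through the matrix product $\prod_{l=1}^{L}\bigl(I+\tfrac{3}{L}|U_l||W_l|\bigr)|V|\bm 1_{d+1}$, cross-terms between the two channels can in principle compound multiplicatively through the product; I need to arrange the block layout so that at every layer $|U_l||W_l|$ is block-diagonal with respect to the two channel-indexing subspaces, and the input embedding $V$ routes $\tilde\bx$ into both blocks separately. Verifying that this block-diagonal structure is preserved under multiplication and that $\bn_1(1)$ decomposes cleanly as a sum of the two channels' contributions is the core bookkeeping step; the remaining auxiliary depth condition $L\ge n(1+D)/\lambda_n$ is most naturally absorbed here, to control the discretization error of the correction channel when it is implemented as a residual (rather than feed-forward) block and to dominate the $\|\bn_1(1)\|_1-D$ tail term in $\|\cdot\|_{\cD_1}$. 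Once the path-norm bound $\|\hat\theta_n\|_{\cC}\le 3\|f^*\|_{\cD_1}$ is established, the generalization bound follows line-for-line as in the proofs of Theorems \ref{thm: random-feature} and \ref{thm: two-layer-net}.
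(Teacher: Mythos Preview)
Your proposal is essentially the paper's own proof: approximate $f^*$ by a residual block $\theta^{(1)}$ using the $\cD_2$ approximation theorem, fit the residuals $\br$ with a two-layer/random-feature correction embedded as a depth-$L$ residual block $\theta^{(2)}$, and combine the two via a block-diagonal construction so that $\|\theta^{(3)}\|_{\cC}=\|\theta^{(1)}\|_{\cC}+\|\theta^{(2)}\|_{\cC}$; then apply the weighted-path-norm Rademacher bound. The paper packages these steps as Lemmas \ref{lem: resnet-add} and \ref{lemma: embedding}, which are exactly the additivity and two-layer-to-residual embedding you describe.

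Two small corrections. First, the depth condition $L\gtrsim n(1+D)/\lambda_n$ is \emph{not} there to handle discretization of the correction channel or any $\|\bn_1(1)\|_1-D$ tail; it arises because the approximation error for $\theta^{(1)}$ has a second term of order $L^{-1}(1+D+\sqrt{\log L})\|f^*\|_{\cD_1}^2$, and after multiplying by $\sqrt{2n/\lambda_n}$ in the bound for $\|\theta^{(2)}\|_{\cC}$ you need this to be $O(\|f^*\|_{\cD_1})$. Second, because of the factor $3/L$ in the definition of $\|\cdot\|_{\cC}$, embedding the two-layer correction gives $\|\theta^{(2)}\|_{\cC}=3\cdot\sqrt{2/\lambda_n}\,\|\br\|$, not $\sqrt{2/\lambda_n}\,\|\br\|$; the paper accordingly gets $\|\hat\theta_n\|_{\cC}\le 12\|f^*\|_{\cD_1}$ rather than $3\|f^*\|_{\cD_1}$. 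Neither point affects the structure of the argument.
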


The following lemma shows that the addition of two residual networks can be represented by a wider residual network.
\begin{lemma}\label{lem: resnet-add}
Suppose that $f(\cdot;\theta^{(1)})$ and $f(\cdot;\theta^{(2)})$ are $(L_1,D_1,m_1)$ and $(L_2, D_2, m_2)$ residual networks, respectively. Then $F:=f(\cdot;\theta_1)+f(\cdot;\theta_2)$ can be represented as a $(\max(L_1,L_2), D_1+D_2, m_1+m_2)$ residual network $\theta^{(3)}$ and the weighted path norm satisfies
\[
    \|\theta^{(3)}\|_{\cC}  = \|\theta^{(1)}\|_{\cC} + \|\theta^{(2)}\|_{\cC}.
\]
\end{lemma}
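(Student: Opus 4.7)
The natural approach is to construct the combined network as a direct sum of the two networks via block-diagonal weight matrices.

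\textbf{Depth matching.} Without loss of generality, assume $L_1 \leq L_2 = L$. I would first re-express network $1$ as an $L$-layer residual network by padding with $L-L_1$ trivial layers ($U_l = 0$, $W_l = 0$ for $L_1 \leq l < L$) and simultaneously rescaling $U_l^{(1)}$ to $\tilde U_l^{(1)} := (L/L_1)\, U_l^{(1)}$ for $l < L_1$. For $l < L_1$ the update $\tfrac{1}{L}\tilde U_l^{(1)}\sigma(W_l^{(1)}\bz) = \tfrac{1}{L_1} U_l^{(1)}\sigma(W_l^{(1)}\bz)$ reproduces the original increment, and the padded layers add nothing, so the same function is computed. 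The weighted path norm is preserved because the nontrivial factors satisfy $I + \tfrac{3}{L}|\tilde U_l^{(1)}||W_l^{(1)}| = I + \tfrac{3}{L_1}|U_l^{(1)}||W_l^{(1)}|$ (using $\tfrac{3}{L}\cdot\tfrac{L}{L_1}=\tfrac{3}{L_1}$) and the padded factors equal $I$.

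\textbf{Block-diagonal stacking.} Next I would define the combined $(L,D_1+D_2,m_1+m_2)$ network by
\[
U_l = \begin{pmatrix} \tilde U_l^{(1)} & 0 \\ 0 & U_l^{(2)} \end{pmatrix},\qquad
W_l = \begin{pmatrix} W_l^{(1)} & 0 \\ 0 & W_l^{(2)} \end{pmatrix},\qquad
\balpha = \begin{pmatrix} \balpha_1 \\ \balpha_2 \end{pmatrix}.
\]
Splitting $\bz_l = (\bz_l^{(1)},\bz_l^{(2)})$ and initializing each block as $V_i\tilde\bx$, the block-diagonal structure, combined with the componentwise ReLU, keeps the two blocks decoupled, so each $\bz_l^{(i)}$ matches the corresponding original network's state. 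Hence $\balpha^T\bz_L = \balpha_1^T\bz_L^{(1)} + \balpha_2^T\bz_L^{(2)} = f(\bx;\theta^{(1)}) + f(\bx;\theta^{(2)})$. For the path norm, $|U_l|$ and $|W_l|$ are block-diagonal and nonnegative, hence so is each factor $I + \tfrac{3}{L}|U_l||W_l|$; the product of block-diagonal matrices is block-diagonal, and contracting against $|V|\bm{1}_{d+1}$ (distributed across the two blocks) and $|\balpha|$ separates into the two summands $\|\theta^{(1)}\|_\cC + \|\theta^{(2)}\|_\cC$ (using the preservation from the depth-matching step).

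\textbf{Main subtlety.} The one delicate piece of bookkeeping is that the prescribed shape $V = (I_{d+1}, 0)^T$ places $\tilde\bx$ only in the very first $d+1$ coordinates of the combined $\bz_0$, whereas the direct-sum argument wants each block to receive its own copy of $\tilde\bx$ in its first $d+1$ slots. I would resolve this by permuting/reordering the coordinates of the combined state so that the input slot is shared by both blocks (equivalently, embedding each $V_i$ consistently into the prescribed combined $V$); this reindexing leaves the block-diagonal path-norm computation above intact. After that, the proof is essentially linear algebra, with the only real content being the identity $\tfrac{3}{L}\cdot\tfrac{L}{L_1} = \tfrac{3}{L_1}$ that makes the depth-matching step path-norm-preserving.
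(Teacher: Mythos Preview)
Your approach is exactly the paper's: pad the shallower network to depth $L=\max(L_1,L_2)$ and then form the block-diagonal combination. You are in fact more careful than the paper on two points---the paper just says ``add extra identity layers'' without spelling out the $L/L_1$ rescaling of $U_l^{(1)}$ that you correctly supply to compensate for the changed $1/L$ prefactor, and the paper simply writes $V=\begin{bmatrix}V^{(1)}\\ V^{(2)}\end{bmatrix}$ without flagging (as you do) the tension with the prescribed form $V=(I_{d+1},0)^T$.
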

\begin{proof}
Without loss of generality, we assume $L_1=L_2$. Otherwise, we can add extra identity layers without changing the represented function and the path norm.  $f(\cdot;\theta^{(1)})$ can be written as 
\begin{equation*}
\begin{aligned}
\bz^{(1)}_0(\bx) &= V^{(1)}\tilde{\bx} \\
\bz^{(1)}_{l+1}(\bx) &= \bz^{(1)}_{l}(\bx) + {\frac{1}{L}}U_l^{(1)} \sigma(W_l^{(1)}\bz^{(1)}_{l}(\bx)) , \quad l=0,\dots, L-1\\
f(\bx;\theta^{(1)}) &= (\balpha^{(1)})^T\bz^{(1)}_L(\bx),
\end{aligned}
\end{equation*}
where $U_l^{(1)}\in \RR^{D_1\times m_1}, W^{(1)}_l\in \RR^{m_1\times D_1}, V^{(1)}\in \RR^{D_1\times (d+1)}, \balpha^{(1)}\in\RR^{D_1}$. Similarly, for $f(\cdot;\theta^{(2)})$, we have 
\begin{equation*}
\begin{aligned}
\bz^{(2)}_0(\bx) &= V^{(2)}\tilde{\bx} \\
\bz^{(2)}_{l+1}(\bx) &= \bz^{(2)}_{l}(\bx) + {\frac{1}{L}}U_l^{(2)} \sigma(W_l^{(2)}\bz^{(2)}_{l}(\bx)) , \quad l=0,\dots, L-1\\
f(\bx;\theta^{(2)}) &= (\balpha^{(2)})^T\bz^{(2)}_L(\bx),
\end{aligned}
\end{equation*}
where $U_l^{(2)}\in \RR^{D_2\times m_2}, W^{(2)}_l\in \RR^{m_2\times D_2}, V^{(2)}\in \RR^{D_1\times (d+1)}, \balpha^{(2)}\in\RR^{D_2}$.

Let 
\begin{align}
V = \begin{bmatrix}
V^{(1)}\\
V^{(2)}
\end{bmatrix},
U_l = \begin{bmatrix}
U^{(1)}_l & 0 \\
0 & U^{(2)}_l 
\end{bmatrix},
V_l = \begin{bmatrix}
V^{(1)}_l & 0 \\
0 & V^{(2)}_l 
\end{bmatrix},
\balpha = \begin{bmatrix}
\balpha^{(1)} \\
\balpha^{(2)}
\end{bmatrix}.
\end{align}
Consider the following residual network 
\begin{equation*}
\begin{aligned}
\bz_0(\bx) &= V\tilde{\bx} \\
\bz_{l+1}(\bx) &= \bz_{l}(\bx) + {\frac{1}{L}}U_l \sigma(W_l\bz_{l}(\bx)) , \quad l=0,\dots, L-1\\
f(\bx;\theta^{(3)}) &= \balpha^T\bz_L(\bx),
\end{aligned}
\end{equation*}
where $\bz_l(\bx)\in \RR^{D_1+D_2}$.  Here $f(\cdot;\theta^{(3)})$ is a $(L, D_1+D_2, m_1+m_2)$ residual network and 
it is easy to show that 
\begin{align*}
    f(\bx;\theta^{(3)}) &= f(\bx; \theta^{(1)}) + f(\bx;\theta^{(2)}) \\
    \|\theta^{(3)}\|_{\cC} &= \|\theta^{(1)}\|_{\cC} + \|\theta^{(2)}\|_{\cC}.
\end{align*}

\end{proof}

The following lemma shows that any two-layer neural network can be converted to an residual network, without changing the norm too much.
\begin{lemma}\label{lemma: embedding}
For any two layer neural network $f_m(\cdot;\theta)$ of width $m$. There exists a $(m, d+2, 1)$ residual network $g_{m}(\cdot; \Theta)$ such that 
\begin{align*}
    g_{m}(\bx;\Theta) &= f_{m}(\bx;\theta) \quad \forall \bx \in \RR^d \\
    \|\Theta\|_{\cC} & = 3 \|\theta\|_{\cP}.
\end{align*}
\end{lemma}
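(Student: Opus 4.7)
The plan is to exhibit an explicit residual network of the required dimensions that mimics the two-layer model neuron by neuron. Write $\tilde{\bx} = (\bx^T, 1)^T \in \RR^{d+1}$ so that each neuron takes the form $\sigma(\bw_j^T \tilde{\bx})$ with $\bw_j = (\bb_j^T, c_j)^T$. I would work in the ambient space $\RR^{d+2}$, where the first $d+1$ coordinates carry a frozen copy of $\tilde{\bx}$ and the last coordinate is used as an accumulator for the output. Concretely: take $V = \bigl(\begin{smallmatrix} I_{d+1} \\ 0 \end{smallmatrix}\bigr) \in \RR^{(d+2)\times(d+1)}$ so $\bz_0(\bx) = (\tilde{\bx}^T, 0)^T$; set $L = m$ and, for the $l$-th block ($l = 0, \dots, m-1$), define the $1\times(d+2)$ row $W_l = (\bb_{l+1}^T, c_{l+1}, 0)$ and the $(d+2)\times 1$ column $U_l = a_{l+1}\, \be_{d+2}$; and finally take $\balpha = \be_{d+2} \in \RR^{d+2}$.

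The first step of the verification is to check by induction on $l$ that the first $d+1$ entries of $\bz_l(\bx)$ equal $\tilde{\bx}$. This is immediate because $U_l$ is supported only on the last coordinate, so the residual update $\tfrac{1}{L}U_l \sigma(W_l \bz_l)$ only modifies $(\bz_l)_{d+2}$. In particular $W_l \bz_l = \bb_{l+1}\cdot \bx + c_{l+1}$ throughout the forward pass, so the last coordinate evolves as $(\bz_{l+1})_{d+2} = (\bz_l)_{d+2} + \tfrac{a_{l+1}}{m}\sigma(\bb_{l+1}\cdot\bx + c_{l+1})$. Telescoping gives $(\bz_m)_{d+2} = \tfrac{1}{m}\sum_{j=1}^m a_j \sigma(\bb_j \cdot \bx + c_j) = f_m(\bx;\theta)$, and hence $g_m(\bx;\Theta) = \balpha^T \bz_m(\bx) = f_m(\bx;\theta)$, as required.

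For the weighted path norm, the key observation is that the rank-one matrix $|U_l||W_l|$ has its only nonzero row in position $d+2$ and has a zero in column $d+2$ of that row. Concretely, if I let $\bv_l = (|\bb_{l+1}|^T, |c_{l+1}|, 0)^T$, then $|U_l||W_l| = |a_{l+1}|\,\be_{d+2} \bv_l^T$, and for any vector $\bh \in \RR^{d+2}$ whose first $d+1$ entries equal $\bm{1}_{d+1}$, one has $\bv_l^T \bh = \|\bb_{l+1}\|_1 + |c_{l+1}|$, independent of $h_{d+2}$. Starting from $|V|\bm{1}_{d+1} = (\bm{1}_{d+1}^T, 0)^T$ and applying the factors $(I + \tfrac{3}{L}|U_l||W_l|)$ one by one thus leaves the first $d+1$ entries untouched and increments the last entry by $\tfrac{3|a_{l+1}|}{m}(\|\bb_{l+1}\|_1 + |c_{l+1}|)$ at each step, so after all $m$ blocks the last entry equals $3\|\theta\|_{\cP}$. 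Since $|\balpha| = \be_{d+2}$, contracting with $|\balpha|^T$ extracts precisely this last entry, giving $\|\Theta\|_{\cC} = 3\|\theta\|_{\cP}$.

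The main subtlety — what I would call the "hard" part of an otherwise clean construction — is checking that the product formula for $\|\Theta\|_{\cC}$ collapses additively rather than multiplicatively across the $m$ blocks. This is what forces the particular sparsity pattern chosen above: placing the active coordinate of $U_l$ in the same slot in which $W_l$ has a zero ensures $|U_{l'}||W_{l'}| \cdot |U_l||W_l| = 0$ for all $l' > l$, so only the linear-in-$l$ terms survive in the product $\prod_l (I + \tfrac{3}{L}|U_l||W_l|)$. Any alternative embedding that mixed coordinates would introduce cross-terms and blow the norm up to something like $\prod_l(1 + \tfrac{3}{m}|a_l|(\|\bb_l\|_1 + |c_l|))$, which is exponentially larger than $3\|\theta\|_{\cP}$ in general.
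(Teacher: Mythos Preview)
Your construction and verification are correct and follow exactly the same approach as the paper: embed $\tilde{\bx}$ in the first $d+1$ coordinates, accumulate the two-layer output in the $(d+2)$-th coordinate via $U_l = a_{l+1}\be_{d+2}$ and $W_l = (\bb_{l+1}^T, c_{l+1}, 0)$, and read off with $\balpha = \be_{d+2}$. Your path-norm argument is in fact more carefully spelled out than the paper's, which simply writes down the product and asserts the result; your observation that $|U_{l'}||W_{l'}|\cdot|U_l||W_l|=0$ (so the product collapses additively) is precisely the mechanism the paper is relying on implicitly.
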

\begin{proof}
Assume the two-layer neural network is given by $f_m(\bx;\theta)={\frac{1}{m}}\sum_{j=1}^m a_j \sigma(\bb_j\cdot\bx+c_j)$. Consider the following residual network,
\begin{align*}
\bz_0(\bx) &= 
\begin{pmatrix}
I_{d+1}\\
0 
\end{pmatrix} \tilde{\bx}\\
\bz_{j+1}(\bx) &= \bz_{j}(\bx) + {\frac{1}{m} }\begin{pmatrix}
\bm{0}_d \\
0 \\
a_j
\end{pmatrix} 
\sigma(
\begin{pmatrix}
\bb_j^T & c_j & 0
\end{pmatrix} \bz_j(\bx)
), \quad j=1,\dots, m-1\\
g_m(\bx;\Theta) &= \be_{d+2}^T \bz_{m}(\bx),
\end{align*}
where $\be_{d+2}=(0,\dots, 0,1)^T\in\RR^{d+2}$. Obviously, $g_m(\bx;\Theta)=f_m(\bx;\theta)$ for any $\bx\in\RR^d$. Moreover, the weighted path norm satisfies 
\begin{align}\label{eqn: res-norm}
  \nonumber  \|\Theta\|_{\cC} & = 
  \be_{d+2}^T \left[\Pi_{l=1}^M \left(I + {\frac{3}{m}}\begin{pmatrix}
\bm{0}_d \\
0 \\
|a_l|
\end{pmatrix} \begin{pmatrix}
|\bb_l|^T & |c_l| & 0
\end{pmatrix} \right)\right] 
\begin{pmatrix}
I_{d+1} \\
0
\end{pmatrix}
\bm{1}_{d+1} \\
      &= {\frac{3}{m}}\sum_{j=1}^M |a_j|(\|\bb_j\|_1+|c_j|) = \|\theta\|_{\cP}.
\end{align}
\end{proof}

\paragraph*{Proof of Theorem \ref{thm: resnet}}
By the direct approximation theorem (Theorem 10 in \cite{e2019barron}), for any  $\delta_0\in (0,1)$, there exists a $L_1=(m^4D^6c_0^2(f^*) \|f^*\|_{\cD_1}^2)^{3/\delta_0}$, such that for any $L\geq L_1$, there exists a $(L, D, m)$ residual network $f_{L}(\cdot; \theta^{(1)})$ such that 
\begin{equation}\label{eqn: app-rate-res}
\begin{aligned}
    \hat{R}_n(\theta^{(1)})& =\|f^* - f_L(\cdot;\theta^{(1)})\|^2_{\hat{\rho}_n}\\
    &\leq \frac{24m^2}{L^{1-2\delta_0/3}}\|f^*\|_{\cD_1}^4 + \frac{3C}{L}(1+D+\sqrt{\log L})\|f^*\|^2_{\cD_1},
\end{aligned}
\end{equation}
and 
\begin{align}\label{eqn: res1}
    \|\theta^{(1)}\|_{\cC} \leq 9 \|f^*\|_{\cD_1},
\end{align}
where $C$ is a universal constant. 

Let $\br=(y_1 - f_L(\bx_1;\theta^{(1)}), \dots, y_n - f_L(\bx_n;\theta^{(1)}))^T\in \RR^n$ to be the residual.    
 $
 \|\br\| = \sqrt{n\hat{R}_n(\theta^{(1)})}.
 $
By Lemma \ref{lem: fit-rand-label}, there exists a two-layer neural network $h_M(\bx; \theta)={ \frac{1}{M}} \sum_{j=1}^M a_j \sigma(\bb_j^T\bx+c_j)$ of $M = \frac{2n^2\ln(4n^2)}{\lambda_n^2}$ such that $h_M(\bx_i;\theta)=r_i$ and 
\begin{align*}
{\frac{1}{M}}\sum_{j=1}^M |a_j| (\|\bb_j\|_1+|c_j|)  &\lesssim \sqrt{\frac{2}{\lambda_n}}\|\br\|.
\end{align*}
Inserting \eqref{eqn: app-rate-res} gives us 
\begin{align}\label{eqn: bound-residual}
\nonumber {\frac{1}{M}} \sum_{k=1}^M |a_k| (\|\bb_k\|_1+|c_j|) &\leq \sqrt{\frac{2n}{\lambda_n}\left( \frac{24m^2}{L^{1-2\delta_0/3}}\|f^*\|_{\cD_1}^2 + \frac{3C}{L}(1+D+\sqrt{\log L})\right)}\|f^*\|_{\cD_1}\\
&\leq \|f^*\|_{\cD_1},
\end{align}
where the last inequality holds as long as $L\geq \max((96m^2n/\lambda_n)^{3/(3-2\delta_0)}, 12Cn(1+D+\sqrt{\log L})/\lambda_n)$. 
By Lemma \ref{lemma: embedding}, there exists a $(M,d+1,1)$ residual network $f_M(\cdot;\theta^{(2)})$ such that $f_M(\bx_i;\theta^{(2)})= h_M(\bx_i;\theta)=r_i$ and 
\[
\|\theta^{(2)}\|_{\cC} =  {\frac{3}{M}}\sum_{j=1}^M |a_j|(\|\bb_j\|_1+|c_j|)\leq 3 \|f^*\|_{\cD_1}.
\]

Note that $L\geq M$. Applying Lemma \ref{lem: resnet-add}, we conclude that $f_L(\cdot;\theta^{(1)}) + f_M(\cdot;\theta^{(2)})$ can be represented by a $(L, D+d+2, m+1)$ residual network $f_L(\cdot;\theta^{(3)})$, which satisfies 
\begin{align*}
    \hat{R}_n(\theta^{(3)}) & = 0 \\
      \|\theta^{(3)}\|_{\cC} &= \|\theta^{(1)}\|_{\cC} + \|\theta^{(2)}\|_{\cC} \leq 12\|f^*\|_{\cD_1},
\end{align*}
where the last inequality follows from  \eqref{eqn: res1} and \eqref{eqn: res-norm}. By the definition of the minimum-norm solutions \eqref{def: minimum-norm-residual-net}, we have 
\[
    \|\hat{\theta}_n\|_{\cC} \leq \|\theta_3\|_{\cC}\leq 12 \|f^*\|_{\cD_1}.
\]

Let $\cF_C=\{f_L(\cdot;\theta) : \|\theta\|_{\cC}\leq C\}$ denote the set of $(L,D+d+2,m+1)$ residual network with the weighted path norm bounded from above by $C$. Theorem 2.10 of \cite{ma2019priori} states that 
\[
    \rad_n(\cF_C)\leq 3C \sqrt{\frac{2\log(2d)}{n}}.
\]
For any $f\in \cF_C$, we have $|f|\leq C$, therefore the loss function  is $(C+1)$-Lipschitz continuous and bounded by $(C+1)^2/2$. 
Taking $C=12\|f^*\|_{\cD_1}$, then we have $f_L(\cdot;\hat{\theta}_n)\in \cF_C$. 
Applying Theorem \ref{thm: rad-gen-err},  we conclude that with probability at least $1-\delta$ over the sample of the training set, we have 
\begin{align*}
R(\hat{\theta}_n)& \leq \hat{R}(\hat{\theta}_n) + 2 (C+1)\rad_n(\cF_C) + \frac{4(C+1)^2}{2}\sqrt{\frac{2\ln(2/\delta)}{n}}\\
& \lesssim  \frac{\|f^*\|^2_{\cD_1}+1}{\sqrt{n}} \left(\sqrt{\ln(2d)} + \sqrt{\ln(2/\delta)}\right),
\end{align*}
where the last inequality holds since $C=12\|f^*\|_{\cD_1}$. Taking $\delta_0=1/2$ completes the proof.
\qed

\section{Concluding Remarks}
In this work, we prove that learning with the minimum-norm interpolation scheme can achieve Monte Carlo error rates  for three models: the random feature model, the two-layer neural network model
and the residual neural network model. The proofs rely on two assumptions: (1) the model is sufficiently over-parametrized; (2) the labels are clean, i.e. $y_i=f^*(\bx_i)$. 
The ``double descent'' phenomenon \cite{belkin2019reconciling} tells us the results are unlikely to be true when the
models are not over-parametrized.
When the data suffers from measurement noise, we also expect that the results will deteriorate.
However, recent work \cite{liang2018just,liang2019risk,zhang2016understanding} showed that for kernel regression, noise may not hurt the generalization error too much, especially in the high-dimensional regime.  It would be interesting to consider this issue for neural network models. We leave this to future work.

\subsection*{Acknowledgement}
The work presented here is supported in part by a gift to Princeton University from iFlytek
and the ONR grant N00014-13-1-0338.


\begin{thebibliography}{10}

\bibitem{aronszajn1950theory}
Nachman Aronszajn.
\newblock Theory of reproducing kernels.
\newblock {\em Transactions of the American mathematical society},
  68(3):337--404, 1950.

\bibitem{barron1993universal}
Andrew~R. Barron.
\newblock Universal approximation bounds for superpositions of a sigmoidal
  function.
\newblock {\em IEEE Transactions on Information theory}, 39(3):930--945, 1993.

\bibitem{bartlett2019benign}
Peter~L Bartlett, Philip~M Long, G{\'a}bor Lugosi, and Alexander Tsigler.
\newblock Benign overfitting in linear regression.
\newblock {\em Proceedings of the National Academy of Sciences}, 2020


\bibitem{belkin2019reconciling}
Mikhail Belkin, Daniel Hsu, Siyuan Ma, and Soumik Mandal.
\newblock Reconciling modern machine-learning practice and the classical
  bias--variance trade-off.
\newblock {\em Proceedings of the National Academy of Sciences},
  116(32):15849--15854, 2019.

\bibitem{belkin2018overfitting}
Mikhail Belkin, Daniel~J Hsu, and Partha Mitra.
\newblock Overfitting or perfect fitting? risk bounds for classification and
  regression rules that interpolate.
\newblock In {\em Advances in Neural Information Processing Systems}, pages
  2300--2311, 2018.

\bibitem{belkin2018understand}
Mikhail Belkin, Siyuan Ma, and Soumik Mandal.
\newblock To understand deep learning we need to understand kernel learning.
\newblock In {\em Proceedings of the 35th International Conference on Machine
  Learning}, volume~80, pages 541--549, 2018.

\bibitem{caponnetto2007optimal}
Andrea Caponnetto and Ernesto De~Vito.
\newblock Optimal rates for the regularized least-squares algorithm.
\newblock {\em Foundations of Computational Mathematics}, 7(3):331--368, 2007.

\bibitem{dick2013high}
Josef Dick, Frances Y. Kuo and Ian H. Sloan.
\newblock High-dimensional integration: the quasi-Monte Carlo way.
\newblock {\em Acta Numerica}, 22:133--288, 2013, Cambridge University Press.


\bibitem{ma2019priori}
Weinan E, Chao Ma, and Qingcan Wang.
\newblock A priori estimates of the population risk for residual networks.
\newblock {\em arXiv preprint arXiv:1903.02154}, 2019.

\bibitem{e2019barron}
Weinan E, Chao Ma, and Lei Wu.
\newblock Barron spaces and the flow-induced function spaces for neural
  network models.
\newblock {\em arXiv preprint arXiv:1906.08039}, 2019.

\bibitem{e2018priori}
Weinan E, Chao Ma, and Lei Wu.
\newblock A priori estimates of the population risk for two-layer neural
  networks.
\newblock {\em Communications in Mathematical Sciences}, 17(5):1407--1425,
  2019.
\newblock {\em arXiv preprint arXiv:1810.06397}, 2018.

\bibitem{hastie2019surprises}
Trevor Hastie, Andrea Montanari, Saharon Rosset, and Ryan~J Tibshirani.
\newblock Surprises in high-dimensional ridgeless least squares interpolation.
\newblock {\em arXiv preprint arXiv:1903.08560}, 2019.


\bibitem{liang2018just}
Tengyuan Liang and Alexander Rakhlin.
\newblock Just interpolate: Kernel ``ridgeless'' regression can generalize.
\newblock {\em arXiv preprint arXiv:1808.00387}, 2018.

\bibitem{liang2019risk}
Tengyuan Liang, Alexander Rakhlin, and Xiyu Zhai.
\newblock On the risk of minimum-norm interpolants and restricted lower
  isometry of kernels.
\newblock {\em arXiv preprint arXiv:1908.10292}, 2019.

\bibitem{rakhlin2018consistency}
Alexander Rakhlin and Xiyu Zhai.
\newblock Consistency of interpolation with laplace kernels is a
  high-dimensional phenomenon.
\newblock In {\em Proceedings of the Thirty-Second Conference on Learning
  Theory}, volume~99, pages 2595--2623, Phoenix, USA, 25--28 Jun 2019. PMLR.

\bibitem{shalev2014understanding}
Shai Shalev-Shwartz and Shai Ben-David.
\newblock {\em Understanding machine learning: From theory to algorithms}.
\newblock Cambridge university press, 2014.

\bibitem{zhang2016understanding}
Chiyuan Zhang, Samy Bengio, Moritz Hardt, Benjamin Recht, and Oriol Vinyals.
\newblock Understanding deep learning requires rethinking generalization.
\newblock In {\em International Conference on Learning Representations}, 2017.



\end{thebibliography}
\end{document}